\def\eqref#1{equation~\ref{#1}}
\def\Eqref#1{Equation~(\ref{#1})}
\def\1{\bm{1}}
\DeclareMathAlphabet{\mathsfit}{\encodingdefault}{\sfdefault}{m}{sl}
\SetMathAlphabet{\mathsfit}{bold}{\encodingdefault}{\sfdefault}{bx}{n}
\DeclareMathOperator*{\argmin}{arg\,min}
\newcommand{\eg}{\textit{e.g.}}
\newcommand{\ie}{\emph{i.e.}}
\newcommand{\method}{MAE-LM\xspace}
\newcommand{\bs}[1]{\boldsymbol{#1}}
\newcommand{\mask}{\texttt{[MASK]}\xspace}
\newcommand{\rank}{\text{rank}\xspace}
\newcommand\tp[2][-1]{{#2}^{\mkern#1mu\top}}
\theoremstyle{plain}
\newtheorem{theorem}{Theorem}[section]
\newtheorem{lemma}[theorem]{Lemma}
\theoremstyle{definition}
\theoremstyle{remark}
\newtheorem*{remark}{Remark}
\title{Representation Deficiency in Masked Language Modeling}
\author{
    Yu Meng${}^1$\thanks{Work done during internship at Meta AI.} \quad Jitin Krishnan${}^2$ \quad Sinong Wang${}^2$ \quad Qifan Wang${}^2$ \quad Yuning Mao${}^2$ \quad \\
    \textbf{ Han Fang${}^2$ \quad Marjan Ghazvininejad${}^2$ \quad Jiawei Han${}^1$ \quad Luke Zettlemoyer${}^2$}\\
    ${}^1$University of Illinois Urbana-Champaign \ \ \ ${}^2$Meta AI \\
    ${}^1$\texttt{\{yumeng5, hanj\}@illinois.edu} \ \ \
    ${}^2$\texttt{\{jitinkrishnan, sinongwang, } \\
    \texttt{ wqfcr, yuningm, hanfang, ghazvini, lsz\}@meta.com
    }
}
\begin{document}

\maketitle

\begin{abstract}
Masked Language Modeling (MLM) has been one of the most prominent approaches for pretraining bidirectional text encoders due to its simplicity and effectiveness.
One notable concern about MLM is that the special \mask symbol causes a discrepancy between pretraining data and downstream data as it is present only in pretraining but not in fine-tuning.
In this work, we offer a new perspective on the consequence of such a discrepancy:
We demonstrate empirically and theoretically that MLM pretraining allocates some model dimensions exclusively for representing \mask tokens, resulting in a representation deficiency for real tokens and limiting the pretrained model's expressiveness when it is adapted to downstream data without \mask tokens. 
Motivated by the identified issue, we propose \method, which pretrains the Masked Autoencoder architecture with MLM where \mask tokens are excluded from the encoder.
Empirically, we show that \method improves the utilization of model dimensions for real token representations, and \method consistently outperforms MLM-pretrained models on the GLUE and SQuAD benchmarks.
\end{abstract}

\section{Introduction}

Pretraining text encoders to learn from bidirectional contexts has achieved enormous success in various natural language processing (NLP) tasks~\citep{clark2020electra,devlin2019bert,liu2019roberta}.
Masked Language Modeling (MLM)~\citep{devlin2019bert} is among one of the most prominent pretraining approaches due to its conceptual simplicity and empirical effectiveness:
By randomly masking a portion of input tokens and training a Transformer encoder to predict the original content based on the remaining bidirectional contexts, the model learns robust representations that generalize well to diverse downstream tasks.
Besides its broad impact in NLP, MLM has also been widely adopted for pretraining in other domains, such as images~\citep{Bao2021BEiTBP,Xie2022SimMIMAS}, videos~\citep{Tong2022VideoMAEMA,Wang2022BEVTBP} and graphs~\citep{Hou2022GraphMAESM}.

Despite its remarkable success, the effectiveness of MLM may be hindered by a discrepancy between pretraining and fine-tuning: 
The special \mask token occurs only in pretraining but not in downstream tasks.
While a few previous studies~\citep{clark2020electra,yang2019xlnet} have attempted to address this issue, they end up proposing new training objectives instead of systematically investigating why and how such a discrepancy impacts the generalization of MLM-pretrained models.

In this work, we study the consequence of including \mask tokens in MLM pretraining by examining the learned token representation space.
We empirically and theoretically show that \mask token representations exclusively occupy some model dimensions, thereby reducing the model capacity for representing real tokens.
Such a representation deficiency issue may not be simply addressed by fine-tuning on downstream tasks:
Those dimensions exclusively used for \mask tokens have not been pretrained to represent real tokens, and will have to be either trained from scratch on downstream data, raising the risk of overfitting~\citep{Hendrycks2019UsingPC,Kumar2022FineTuningCD}, or become unused, resulting in a waste of model capacity.
% model dimension positively correlates with model generalization ~\citep{Neyshabur2017ExploringGI}

To address the representation deficiency issue, we propose a simple text encoder pretraining method, \method, which conducts MLM pretraining based on the Masked Autoencoder architecture~\citep{He2022MaskedAA}.
Notably, \mask tokens are omitted from the encoder's input so that the real token representations can utilize the entire model dimensions theoretically.
An auxiliary decoder, used only in pretraining and not in fine-tuning, takes the encoder's output representations and \mask positions to predict the original tokens.
We demonstrate empirically that by excluding \mask tokens from the encoder, \method improves the utilization of model dimensions both in pretraining and downstream tasks and achieves consistent and notable improvements over previous models pretrained by MLM and its variants on the GLUE and SQuAD benchmarks.\footnote{Code can be found at \url{https://github.com/yumeng5/MAE-LM}.}

Our main contributions are as follows: 
(1) We investigate the token representation space trained by MLM, and identify a previously unknown representation deficiency issue when the pretrained model is applied to real data without \mask tokens.
(2) Based on empirical and theoretical analyses, we explain why the representation deficiency issue occurs in the conventional MLM pretraining setup.
(3) We show that a simple pretraining method \method can address the identified issue and improve the downstream task performance of previous MLM-pretrained models under multiple pretraining and fine-tuning settings.
% Our analyses and results also provide new perspectives to understand why certain pretraining methods (\eg, ELECTRA~\citep{clark2020electra}) and adaptation methods (\eg, prompt-based fine-tuning~\citep{gao2021making,Schick2021ExploitingCF}) are advantageous, and may shed light on future research of MLM-style pretraining.

\section{Analysis of Token Representations in MLM}
\label{sec:analysis}
\subsection{Preliminaries}

\textbf{Transformer Encoder.}
Transformer encoders contain multiple Transformer layers, where each layer consists of two submodules, multi-head self-attention (MHSA) and feed-forward network (FFN). 
The self-attention mechanism uses queries $\bs{Q}$ and keys $\bs{K}$ to compute attention weights, and outputs a weighted sum of the values $\bs{V}$.
MHSA performs self-attention in parallel over $N$ heads as follows:
\begin{align*}
\text{Attn}(\bs{Q}, \bs{K}, \bs{V}) &= \text{Softmax}\left( \frac{\bs{Q} \bs{K}^\top }{\sqrt{d_h}}\right) \bs{V}, \\
\text{MHSA}(\bs{X}) = \text{Concat}(\text{head}_1, \dots, \text{head}_N) \bs{W}^O &, \quad 
\text{head}_h = \text{Attn}(\bs{XW}^Q_h, \bs{XW}^K_h, \bs{XW}^V_h),
\end{align*}
where $\bs{X} \in \mathbb{R}^{n\times d}$ is the input representations to MHSA, $n$ is the number of tokens and $d$ is the model dimension. 
$d_h$ is the dimension of head $h$ and is usually set to $d/N$.
$\bs{W}^Q_h, \bs{W}^K_h, \bs{W}^V_h \in \mathbb{R}^{d\times d_h}$ and $\bs{W}^O \in \mathbb{R}^{d\times d}$ are learnable weight matrices. 
The outputs of MHSA are further passed to FFN which learns nonlinear transformations to derive the final outputs of the Transformer layer.

\textbf{Masked Language Modeling (MLM).}
Given a text sequence $\bs{x} = [x_1, \dots, x_i, \dots, x_n]$, MLM randomly replaces a set of token positions $\mathcal{M}$ with \mask symbols. 
The resulting partially masked sequence $\hat{\bs{x}} = [x_1, \dots, \mask_i, \dots, x_n]$ is then fed to the Transformer encoder $\bs{\theta}$ which outputs the token representations $\bs{H} = [\bs{h}_1, \dots, \bs{h}_i, \dots, \bs{h}_n]$.
The encoder $\bs{\theta}$ is trained to predict the original token out of the vocabulary $\mathcal{V}$ at each masked position by minimizing the cross-entropy loss $\mathcal{L}_\text{MLM}$:
\begin{align}
\begin{split}
\label{eq:softmax}
p_{\bs{\theta}} (x_i | \hat{\bs{x}}) = \frac{\exp(\bs{e}_{x_i}^\top \bs{h}_i)}{\sum_{x' \in \mathcal{V}} \exp(\bs{e}_{x'}^\top \bs{h}_i)}, \quad \mathcal{L}_\text{MLM} = \mathbb{E} \left( - \sum_{i\in \mathcal{M}} \log  p_{\bs{\theta}} \left( x_i \big| \hat{\bs{x}} \right) \right),
\end{split}
\end{align}
where $\bs{e}_x$ refers to the embedding of token $x$.

\subsection{Rank-Deficient Real Token Representations}
MLM pretraining introduces a special \mask token to replace the token positions to be predicted, but such \mask tokens are usually absent from downstream task data.
Therefore, to study the PLM's capacity for downstream data representation, we examine the \emph{real token} representation space trained with MLM.
A common measure of the representation space capacity is the \emph{rank} of the data representation matrix~\citep{Ansuini2019IntrinsicDO,Bhojanapalli2020LowRankBI}.
In our case, this refers to the real token representation matrix $\bs{H}_{\mathcal{R}} \in \mathbb{R}^{n \times d}$ ($n \gg d$) where each row corresponds to the representation of a real token.
Ideally, one would hope $\bs{H}_{\mathcal{R}}$ to have high column rank (\ie, $\rank(\bs{H}_{\mathcal{R}}) \approx d$) so that more model dimensions are effective for modeling real tokens.
However, as we will show next, a portion of the model dimensions will be exclusively used for \mask token representations in MLM pretraining, so that $\bs{H}_{\mathcal{R}}$ is necessarily rank-deficient (\ie, not all model dimensions are leveraged to represent real tokens).

% \begin{figure}[!t]
% % \subfigcapmargin=10pt
% \centering
% \includegraphics[width=0.95\columnwidth]{Figures/eff_dim.pdf}
% \caption{.}
% \label{fig:eff_dim}
% \vspace{-1em}
% \end{figure}

% \begin{figure}[!t]
% \begin{minipage}{0.5\columnwidth}
% % \vspace{-0.2em}
% \centering
% \includegraphics[width=0.95\textwidth]{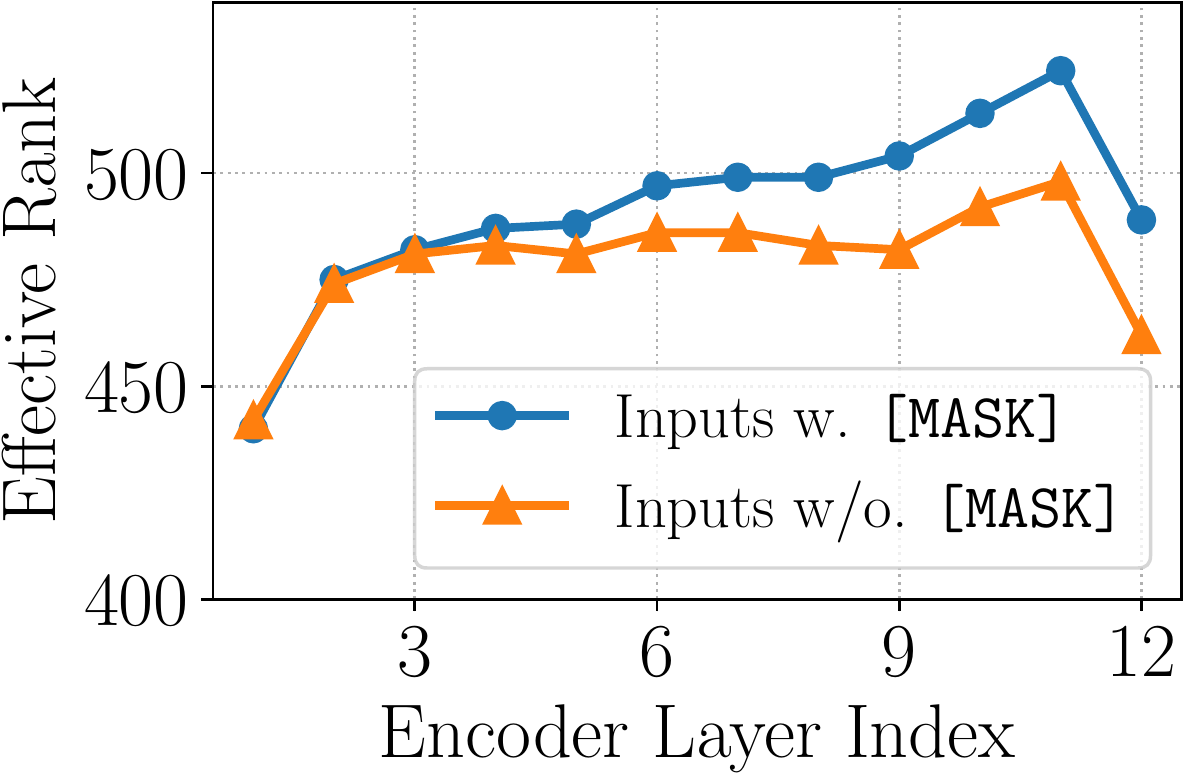}
% % \vspace{+0.5em}
% \caption{.}\label{fig:num_data}
% \end{minipage}\hfill
% \begin{minipage}{0.5\columnwidth}
% % \vspace{-0.2em}
% \centering
% \includegraphics[width=0.95\textwidth]{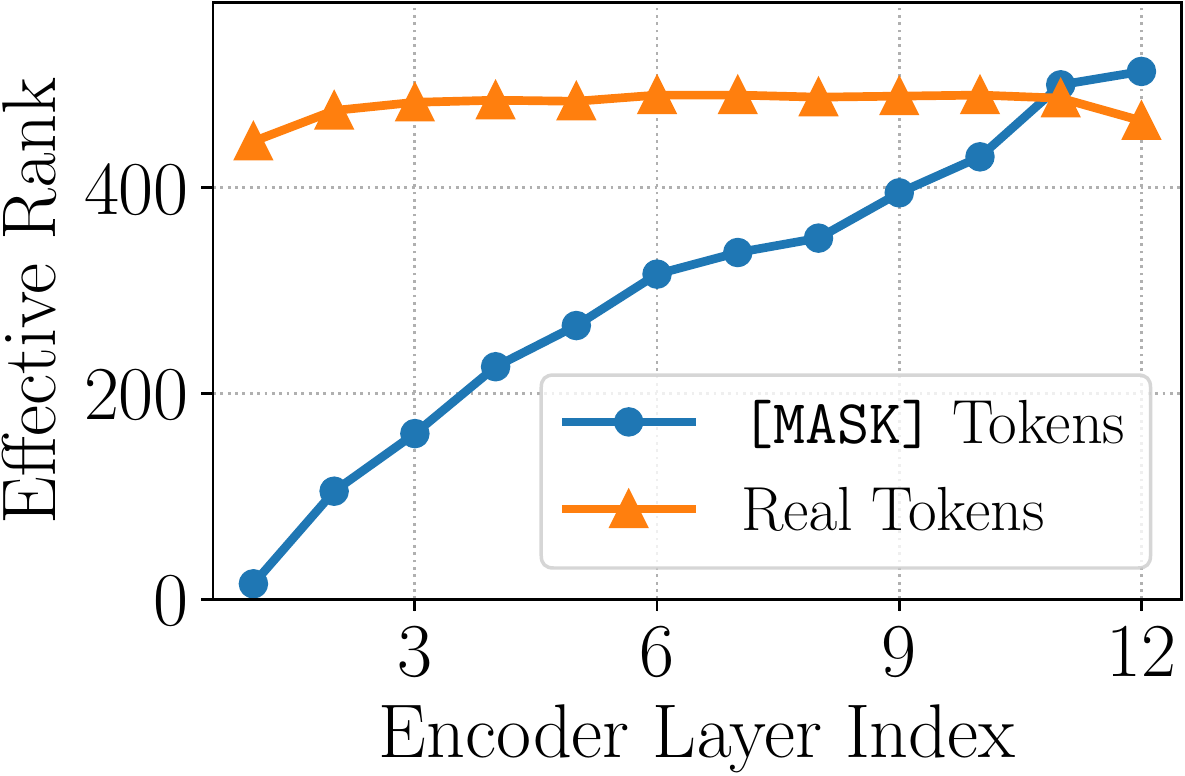.pdf}
% % \vspace{+0.5em}
% \caption{.}\label{fig:num_data}
% \end{minipage}\hfill
% \vspace{-1em}
% \end{figure}

% \begin{figure}[!t]
\begin{wrapfigure}{r}{0.57\textwidth}


\centering
\begin{subfigure}[t]{0.276\columnwidth}
\centering
\includegraphics[width=\linewidth]{Figures/eff_dim_pretrain_finetune.pdf}
\caption{}
% \vspace{-.5em}
\label{fig:pretrain_finetune_dim}
\end{subfigure}
~
\centering
\begin{subfigure}[t]{0.276\columnwidth}
\centering
\includegraphics[width=\linewidth]{Figures/eff_dim_mask_real.pdf}
\caption{}
% \vspace{-.5em}
\label{fig:mask_real_dim}
\end{subfigure}
\caption{In an MLM-pretrained model, (a) some model dimensions are exclusively used for representing \mask tokens, resulting in a representation deficiency for modeling inputs without \mask, especially in deeper layers; (b) the effective rank of \mask token representation space increases throughout Transformer layers.}
\vspace{-.5em}
\end{wrapfigure}
% \end{figure}

\textbf{Empirical Evidence.}
We evaluate the representation space of a pretrained $12$-layer RoBERTa$_\text{base}$ model~\citep{liu2019roberta} on the validation set of the pretraining corpus with $5$ million tokens.
We first apply $15\%$ random masks to these input sequences (same as the pretraining setting), and obtain the token representation matrix $\bs{H}^l \in \mathbb{R}^{n \times d}$ ($n \approx 5\times10^6$ is the total number of tokens in the corpus, $d = 768$ is the model dimension), which contains both real token and mask token representations, for each layer $l$ in the pretrained RoBERTa. 
We then feed the same input sequences in their original form (\ie, without \mask) to the pretrained RoBERTa model and obtain the token representation matrix $\widetilde{\bs{H}}^l \in \mathbb{R}^{n \times d}$ which consists of real token representations only.
Comparing the rank of $\widetilde{\bs{H}}^l$ with $\bs{H}^l$ gives insights about the change in representation capacity when adapting a pretrained MLM model to inputs without \mask.

Since numerical errors and small perturbations practically render any large matrix full-rank regardless of its actual rank, we compute the \emph{effective rank}~\citep{cai2020isotropy} of a matrix $\bs{H}$: We only consider $\bs{H}$'s most significant components  that account for the majority of the variance reflected by singular values. 
Given a threshold value $\tau$, we define the $\tau$-effective rank of $\bs{H}$ as 
$
\rank_\tau(\bs{H}) = \argmin_k \left( \frac{\sum_{i=1}^k \sigma_i^2}{\sum_{i=1}^d \sigma_i^2} \ge \tau \right),
$
where $\sigma_i$ is the $i$th largest singular value of $\bs{H}$.
For example, $\rank_{0.9}(\bs{H}) = 10$ means that $90\%$ of $\bs{H}$'s variance can be captured with $10$ dimensions.
We follow the definition of effective rank in \cite{cai2020isotropy} only to perform empirical computations of the rank to showcase the issue, and we do not use it in our theoretical analysis below.

Figure~\ref{fig:pretrain_finetune_dim} shows $\rank_{0.9}(\bs{H}^l)$ (Input w. \mask) and $\rank_{0.9}(\widetilde{\bs{H}}^l)$ (Input w/o. \mask).
It generally holds that $\rank_{0.9}(\widetilde{\bs{H}}^l) < \rank_{0.9}(\bs{H}^l)$, and the gap is more prominent in deeper layers. 
This demonstrates that some model dimensions are reserved for \mask token representations in almost all encoder layers, and these dimensions are not active when the input sequences consist of real tokens entirely.
Such representation deficiencies for modeling real tokens become more severe in deeper layers where \mask token representations occupy more dimensions, shown in Figure~\ref{fig:mask_real_dim}.

\textbf{Theoretical Analysis.}
We theoretically validate the empirical observation above that MLM necessarily allocates a subspace for \mask token representations which is not contained by the real token representation subspace, so that the real token representations are rank-deficient.

\begin{lemma}[Rank increase of \mask token representations in Transformer encoder]
\label{lem:rank}
The rank of \mask token representations will increase from the input layer to the output layer of an $L$-layer Transformer encoder trained with MLM (i.e., $\rank(\bs{H}_{\mathcal{M}}^L) \gg \rank(\bs{H}_{\mathcal{M}}^0)$).
\end{lemma}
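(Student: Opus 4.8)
The plan is to sandwich the two ranks: I would show that the input-layer rank is $O(1)$ while the output-layer rank is forced to be large by the MLM objective itself, so that the claimed gap follows from comparing the two endpoints (I need not track what happens layer by layer). The easy half is the upper bound on $\rank(\bs{H}_{\mathcal{M}}^0)$. At the embedding layer every masked position receives the \emph{same} content embedding $\bs{e}_{\mask}$, so the rows of $\bs{H}_{\mathcal{M}}^0$ coincide; absent positional encodings this gives $\rank(\bs{H}_{\mathcal{M}}^0)=1$, and the structural point is that masked inputs carry no token-specific direction and therefore cannot span the model space.

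The substance is the lower bound on $\rank(\bs{H}_{\mathcal{M}}^L)$, which I would derive directly from the trained objective \eqref{eq:softmax}. A well-trained encoder drives $p_{\bs{\theta}}(x_i \mid \hat{\bs{x}})$ close to $1$ at the masked positions, which means that for each such position $i$ the target logit dominates all others by a margin, $\bs{e}_{x_i}^\top \bs{h}_i \ge \bs{e}_{x'}^\top \bs{h}_i + \gamma$ for every $x' \neq x_i$. Taking two masked positions $i,j$ with distinct targets $x_i \neq x_j$ and adding their two margin inequalities yields $(\bs{e}_{x_i}-\bs{e}_{x_j})^\top(\bs{h}_i - \bs{h}_j) \ge 2\gamma$, hence $\|\bs{h}_i - \bs{h}_j\| \ge \gamma/\max_x \|\bs{e}_x\|$. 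So the output representations of positions predicting distinct tokens are pairwise separated by a fixed distance, while the encoder's final LayerNorm caps $\|\bs{h}_i\|$, confining all of them to a bounded ball inside the column space of $\bs{H}_{\mathcal{M}}^L$. A standard volumetric packing bound then shows that a ball of radius $B$ in an $r$-dimensional subspace holds at most $(1 + 2B\max_x\|\bs{e}_x\|/\gamma)^r$ such points, so if the model confidently predicts $k$ distinct targets then $\rank(\bs{H}_{\mathcal{M}}^L) = r \gtrsim \log k$, which is large once the realized vocabulary at masked positions is large. Combining the two bounds gives $\rank(\bs{H}_{\mathcal{M}}^L) \gg \rank(\bs{H}_{\mathcal{M}}^0)$.

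The main obstacle is exactly this lower bound, and the reason is that \emph{argmax}-correctness alone is insufficient: a single direction can be the unique maximizer for many tokens at once if one allows vectors of unbounded norm (for instance embeddings placed in convex position), so a naive ``distinct predictions force high rank'' argument fails. The genuine content is that low MLM loss supplies a uniform \emph{margin} and LayerNorm supplies a uniform \emph{norm bound}; only together do they convert diversity of predictions into a packing constraint and hence a rank lower bound. I would therefore state the margin and norm assumptions explicitly, since they are what turn the rank increase into a theorem rather than an empirical tendency.
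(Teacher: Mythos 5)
Your lower-bound argument for $\rank(\bs{H}_{\mathcal{M}}^L)$ takes a genuinely different route from the paper's. The paper argues via a softmax-bottleneck factorization: the output logits must approximate the true log-probability matrix up to a row shift, $\bs{H}_{\mathcal{M}}^L\bs{E}^\top \approx \bs{T} + \bs{c}\bs{1}^\top$, and since $\bs{T}$ is high-rank \citep{Yang2018BreakingTS}, both factors $\bs{H}_{\mathcal{M}}^L$ and $\bs{E}$ must be high-rank. Your margin-plus-packing argument is more elementary and self-contained (it needs no assumption on the rank of the true distribution, only a confidence margin and a norm cap), and the chain from margin to pairwise separation $\|\bs{h}_i - \bs{h}_j\| \ge \gamma/\max_x\|\bs{e}_x\|$ to the volumetric bound is correct. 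But it buys much less: you obtain $\rank(\bs{H}_{\mathcal{M}}^L) \gtrsim \log k / \log\left(1 + 2B\max_x\|\bs{e}_x\|/\gamma\right)$, a \emph{logarithmic} lower bound in the number of distinct confidently predicted targets --- on the order of tens for a $32$K vocabulary --- whereas the paper's factorization argument pushes the rank toward the full model dimension, which is what matches the empirical effective ranks in the hundreds. Your uniform-confidence premise is also strong for MLM, where masked-token probabilities are typically far from $1$; you would need to restrict to a subset of confident positions realizing many distinct targets, which you do implicitly but should state as an explicit hypothesis.

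The genuine gap is on the half you call easy. You dispose of $\rank(\bs{H}_{\mathcal{M}}^0)$ with ``absent positional encodings this gives rank $1$,'' but positional encodings \emph{are} present in the setting: $\bs{H}_{\mathcal{M}}^0 = \bs{1}\bs{e}_{\mask}^\top + \bs{P}$, and a priori $\bs{P}$ could have rank comparable to $d$, which would destroy the conclusion; the vague appeal to ``no token-specific direction'' does not bound it. The paper does real work here: it shows $\rank(\bs{H}_{\mathcal{M}}^0)\le \rank(\bs{P})+1$ and then that $\bs{P}$ is low-rank because position and token embeddings occupy disjoint subspaces of $\mathbb{R}^d$, so $\rank(\bs{P}) \le d - \rank(\bs{E})$, with $\bs{E}$ high-rank by the \emph{same} factorization argument used at the output layer. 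That last step is exactly where the paper reuses the softmax-bottleneck machinery you discarded: your margin argument never establishes that $\bs{E}$ is high-rank, so you cannot close the position-embedding gap the same way without importing an additional assumption (either that $\bs{P}$ is low-rank outright, or a separate argument that the token embedding matrix spans most of $\mathbb{R}^d$).
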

\vspace{-1em}
\begin{proof} 
We first show that $\bs{H}_{\mathcal{M}}^L$ will be high-rank in a well-trained MLM model and then show that $\bs{H}_{\mathcal{M}}^0$ is necessarily low-rank, and thus the statement holds.

As shown in \Eqref{eq:softmax}, the output token probability distributions at masked positions are computed from the encoder's output representations $\bs{H}_{\mathcal{M}}^L \in \mathbb{R}^{m \times d}$ and token embeddings $\bs{E} \in \mathbb{R}^{|\mathcal{V}| \times d}$.
Denote the true log probability distributions of the masked token prediction task as $\bs{T} \in \mathbb{R}^{m \times |\mathcal{V}|}$:
\setlength\arraycolsep{3pt}
$$
\bs{T} = \begin{bmatrix}
\log p\left(x_1|\hat{\bs{x}}_1\right) & \log p\left(x_2|\hat{\bs{x}}_1\right) & \cdots & \log p\left(x_{|\mathcal{V}|}|\hat{\bs{x}}_1 \right) \\
\log p\left(x_1|\hat{\bs{x}}_2\right) & \log p\left(x_2|\hat{\bs{x}}_2\right) & \cdots & \log p\left(x_{|\mathcal{V}|}|\hat{\bs{x}}_2 \right) \\
\vdots & \vdots & \ddots & \vdots \\
\log p\left(x_1|\hat{\bs{x}}_m\right) & \log p\left(x_2|\hat{\bs{x}}_m\right) & \cdots & \log p\left(x_{|\mathcal{V}|}|\hat{\bs{x}}_m \right)
\end{bmatrix},
$$
then $\bs{H}_{\mathcal{M}}^L$ and $\bs{E}$ are trained to approximate $\bs{T}$ with a row shift (due to the softmax normalization)~\citep{Yang2018BreakingTS}:
\begin{equation}
\label{eq:mlm_matrix}
\bs{H}_{\mathcal{M}}^L \bs{E}^\top \approx \bs{T} + \bs{c}\bs{1}^\top, 
\end{equation}
where $\bs{c} \in \mathbb{R}^m$ contains the shifting constant added to each row, and $\bs{1} \in \mathbb{R}^{|\mathcal{V}|}$ is a vector of all ones.

It is shown in \cite{Yang2018BreakingTS} that the true probability distribution $\bs{T}$ is high-rank (as high as $|\mathcal{V}|$) due to the complexity of natural language. Since $\rank(\bs{H}_{\mathcal{M}}^L \bs{E}^\top) \le \min \{ \rank(\bs{H}_{\mathcal{M}}^L), \rank(\bs{E}) \}$, both $\bs{H}_{\mathcal{M}}^L$ and $\bs{E}$ need to be high-rank to achieve a good approximation of $\bs{T} + \bs{c}\bs{1}^\top$.

Next, we show $\bs{H}_{\mathcal{M}}^0$ is low-rank. $\bs{H}_{\mathcal{M}}^0$ is the sum of token embeddings and position embeddings at masked positions:
$$
\bs{H}_{\mathcal{M}}^0 = \bs{1} \bs{e}_{\mask}^\top + \bs{P},
$$
where $\bs{e}_{\mask} \in \mathbb{R}^d$ is the \mask token embedding, and $\bs{P} \in \mathbb{R}^{m \times d}$ is the position embeddings. 

Since we have 
$\rank(\bs{1} \bs{e}_{\mask}^\top + \bs{P}) \le \rank(\bs{1} \bs{e}_{\mask}^\top) + \rank(\bs{P}) = \rank(\bs{P}) + 1$,
we only need to show $\bs{P}$ is low-rank.
Previous studies~\citep{he2020deberta,Ke2021RethinkingPE} have identified that position embeddings $\bs{P}$ and token embeddings $\bs{E}$ encode disjoint information, and are learned in separate subspaces of $\mathbb{R}^{d}$. 
Therefore, $\rank(\bs{P}) \le d - \rank(\bs{E})$. We also showed that $\bs{E}$ must be high-rank to satisfy \Eqref{eq:mlm_matrix}, and thus $\bs{P}$ is necessarily low-rank. Finally, $\bs{H}_{\mathcal{M}}^0$ is also low-rank as $\rank(\bs{H}_{\mathcal{M}}^0) \le \rank(\bs{P}) + 1$.
\end{proof}

\begin{remark}
Lemma~\ref{lem:rank} corresponds to the empirical observation in Figure~\ref{fig:mask_real_dim}, and can be intuitively interpreted as a necessary consequence of the \mask token contextualization process in Transformers: The \mask representations at the input layer are context-free, and they need to aggregate contextual information from other tokens in the sequence for predicting the original word, resulting in an increase in the information content of \mask token representations.
We also note that the rank increase statement does not necessarily apply to real token representations.
This is because MLM does not directly train the real token representations (\eg, the training objective in \Eqref{eq:mlm_matrix} does not apply to real token positions\footnote{Some MLM training settings adopt a trick that keeps $10\%$ of \mask as original tokens and randomly replaces another $10\%$ of \mask with other tokens. Even with this trick, the training signals on real token representations are scarce. Furthermore, later studies~\citep{Wettig2022ShouldYM} report that this trick is not necessary---training exclusively on \mask positions performs well.}).
\end{remark}

Based on Lemma~\ref{lem:rank}, we proceed to prove that $\bs{H}_{\mathcal{M}}^l$ occupies a different subspace that is not contained by the subspace of $\bs{H}_{\mathcal{R}}^l$, resulting in deficient representations for real tokens.
In the following, we analyze the rank change induced by the \emph{self-attention} mechanism since it is the source of contextualization of \mask tokens, and the effectiveness of text encoders is typically attributed to the contextualized representations~\citep{Ethayarajh2019HowCA}.
% and the rank of self-attention output representations sets an upper bound for how much information can be exchanged among tokens.
While we do not account for MLPs and residual connections, our analysis validates that the rank deficiency is caused by the self-attention mechanism, and in practice, MLPs and residual connections do not prevent the issue from happening. 
\begin{restatable}[Rank deficiency of real token representations]{theorem}{deficient}
\label{thm:deficient}
There exists some layer $l$ in the Transformer encoder where the real token representation $\bs{H}_{\mathcal{R}}^l$ is rank-deficient. 
In particular, the row space of $\bs{H}_{\mathcal{R}}^l$ does not contain the row space of \mask token representation $\bs{H}_{\mathcal{M}}^l$.
\end{restatable}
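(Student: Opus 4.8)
The plan is to isolate the self-attention update and argue by contradiction, leveraging Lemma~\ref{lem:rank}. Following the stated simplification, I would drop the FFN and residual connections and write a single layer as $\bs{H}^{l+1} = \bs{A}^l \bs{H}^l \bs{W}^l$, where $\bs{A}^l$ is the row-stochastic attention matrix and $\bs{W}^l$ folds the value and output projections (the multi-head case is handled by summing the per-head contributions, whose row spaces behave identically). Partitioning the rows of $\bs{A}^l$ and $\bs{H}^l$ into a real block and a \mask block, every output representation is a convex combination of the layer's value vectors, so the row spaces of both $\bs{H}_{\mathcal{R}}^{l+1}$ and $\bs{H}_{\mathcal{M}}^{l+1}$ lie inside $\big(\text{row}(\bs{H}_{\mathcal{R}}^{l}) + \text{row}(\bs{H}_{\mathcal{M}}^{l})\big)\bs{W}^l$. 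The goal is then to show that the inclusion $\text{row}(\bs{H}_{\mathcal{M}}^{l}) \subseteq \text{row}(\bs{H}_{\mathcal{R}}^{l})$ must fail at some layer $l$, which immediately yields both assertions: non-containment forces $\bs{H}_{\mathcal{R}}^l$ to be rank-deficient, since a full-rank $\bs{H}_{\mathcal{R}}^l$ would have row space $\mathbb{R}^d$ and trivially contain every subspace.

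Suppose, toward a contradiction, that this inclusion holds at every layer. Then $\text{row}(\bs{H}^l) = \text{row}(\bs{H}_{\mathcal{R}}^l) =: R_l$, and feeding the real rows through the update above gives $R_{l+1} \subseteq R_l \bs{W}^l$, so $\dim R_l$ is non-increasing in $l$ and hence bounded by $\dim R_0$. The same mixing bound applied to the \mask block yields $\rank(\bs{H}_{\mathcal{M}}^{l}) \le \dim R_0$ for all $l$. On the other hand, Lemma~\ref{lem:rank} together with \Eqref{eq:mlm_matrix} forces $\bs{H}_{\mathcal{M}}^{L}$ to attain a rank high enough to approximate the high-rank target $\bs{T} + \bs{c}\bs{1}^\top$. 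As soon as this required \mask rank exceeds the dimension of the propagated real-token subspace, the assumed inclusion is impossible, and I would take $l$ to be the first layer at which the \mask rank surpasses $\dim R_{l-1}$.

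The hard part will be making this rank separation strict rather than merely asymptotic: self-attention draws both blocks from the same pool of value vectors, so nothing in the mixing step alone prevents the real subspace from enlarging to meet the \mask subspace. The separation must instead come from the training asymmetry noted in the Remark—only \mask positions receive the supervision of \Eqref{eq:softmax}, so the real representations are never driven toward the output subspace aligned with $\bs{E}$—combined with the input-layer bound $\dim R_0 \le \rank(\bs{E}) + \rank(\bs{P})$ coming from the disjoint token/position subspaces used in Lemma~\ref{lem:rank}. Turning these two ingredients into a quantitative guarantee that the supervised \mask rank strictly exceeds the reachable real-token dimension, rather than both coincidentally filling $\mathbb{R}^d$, is the crux of the argument and is where the unsupervised status of the real positions, not pure rank counting, has to be used.
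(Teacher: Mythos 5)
Your overall strategy---contradiction via Lemma~\ref{lem:rank}, dropping FFN and residuals, and noting that non-containment immediately forces rank deficiency---matches the paper's proof. But the core of your argument has a genuine gap that you yourself flag as ``the crux'': the monotone-dimension bound is far too weak to produce the contradiction. Under the containment assumption, your mixing argument yields $\rank(\bs{H}_{\mathcal{M}}^L) \le \dim R_0 = \rank(\bs{H}_{\mathcal{R}}^0)$, but $\bs{H}_{\mathcal{R}}^0$ is a sum of token and position embeddings and, by the very argument in Lemma~\ref{lem:rank}, $\bs{E}$ must be high-rank, so $\dim R_0$ is itself close to $d$. ``High-rank'' in Lemma~\ref{lem:rank} means at most $d$ anyway, so the inequality $\rank(\bs{H}_{\mathcal{M}}^L) \le \dim R_0$ is perfectly consistent with the lemma and no contradiction follows. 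Your proposed repair---invoking the lack of supervision at real positions to argue the real-token subspace cannot reach the output subspace---is not a rank argument and is close to circular, since the unsupervised status of real tokens is exactly what the theorem is trying to convert into a rank statement.

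The missing ingredient is quantitative rank \emph{collapse}, not mere non-increase. The paper's proof (following \cite{Dong2021AttentionIN}) measures the residual $\bs{R}^l = \bs{H}_{\mathcal{R}}^l - \bs{1}\bs{h}^\top$ from the nearest rank-$1$ matrix and exploits the containment assumption in the form $\bs{H}_{\mathcal{M}} = \bs{U}\bs{H}_{\mathcal{R}}$ with $\bs{U}\bs{1} = \bs{1}$: this makes the \mask rows share the same rank-$1$ component $\bs{1}\bs{h}^\top$ as the real rows, so the softmax attention update contracts the residual cubically, $\| \bs{R}' \|_{1,\infty} \le 4\epsilon \| \bs{R} \|_{1,\infty}^3$, and hence $\bs{H}_{\mathcal{R}}^l$ (and with it $\bs{H}_{\mathcal{M}}^l$, whose row space it contains) converges doubly exponentially to rank $1$. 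That is what contradicts the high rank of $\bs{H}_{\mathcal{M}}^L$ required by \Eqref{eq:mlm_matrix}, independently of how large $\dim R_0$ is. Without some contraction estimate of this kind---row-stochasticity of the attention matrix plus the shared rank-$1$ mode, not just closure of row spaces under mixing---your argument cannot separate the two ranks, and the proof does not go through.
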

\vspace{-1em}
\begin{proof} 
We provide a proof sketch below. Detailed proofs can be found in Appendix~\ref{app:proof}.
We prove the statement by contradiction: Suppose that the row space of $\bs{H}_{\mathcal{R}}^l \in \mathbb{R}^{n \times d}$ contains the row space of $\bs{H}_{\mathcal{M}}^l \in \mathbb{R}^{m \times d}$, then we can represent $\bs{H}_{\mathcal{M}}^{l}$ with $\bs{H}_{\mathcal{R}}^{l}$ via a linear combination weight matrix $\bs{U}$:
\begin{equation}
\label{eq:linear}
\bs{H}_{\mathcal{M}}^{l} = \bs{U} \bs{H}_{\mathcal{R}}^{l}, \quad \bs{U} \in \mathbb{R}^{m \times n}.
\end{equation}
We show that under this assumption, $\bs{H}_{\mathcal{M}}^l$ will converge exponentially (with $l$) to a rank-$1$ matrix, which contradicts with Lemma~\ref{lem:rank}.
To examine the matrix rank, we follow the definition of matrix residual $\bs{R}^{l}$ \citep{Dong2021AttentionIN} which measures the difference between $\bs{H}_{\mathcal{R}}^l$ and a rank-$1$ matrix:
$$
\bs{R}^{l} = \bs{H}_{\mathcal{R}}^{l} - \bs{1} \bs{h}^\top, \quad \bs{h} = \argmin_{\bs{x}} \left\| \bs{H}_{\mathcal{R}}^{l} - \bs{1} \bs{x}^\top \right\|.
$$
Based on the self-attention formula and the assumption in \Eqref{eq:linear}, we can derive a bound for the norm of $\bs{R}^{l}$ as a function of $\bs{R}^{l-1}$:
\begin{align*}
\left\| \bs{R}^{l} \right\|_{1,\infty} \le 4 \epsilon \left\| \bs{R}^{l-1} \right\|_{1,\infty}^3, \quad
\epsilon = \left\| \frac{\bs{W}^Q \tp{\bs{W}^K}}{\sqrt{d}} \right\|_1 \left\| \bs{W}^V \bs{W}^O \right\|_{1,\infty} \left\|  \bs{U} \right\|_{\infty} \left( 1 + \left\|  \bs{U} \right\|_{\infty} \right).
\end{align*}
where $\left\|\cdot \right\|_{1,\infty}$ denotes the geometric mean of $\ell_1$ and $\ell_\infty$ norm.
This shows that $\left\| \bs{R}^{l} \right\|_{1,\infty}$ converges exponentially with $l$ to zero, and thus $\bs{H}_{\mathcal{R}}^l$ converges exponentially with $l$ to a rank-$1$ matrix.
We also have $\rank(\bs{H}^l_{\mathcal{M}}) \le \rank(\bs{H}^l_{\mathcal{R}})$ as the row space of $\bs{H}_{\mathcal{M}}^l$ is contained by the row space of $\bs{H}_{\mathcal{R}}^l$.  Hence, $\bs{H}^l_{\mathcal{M}}$ will also converge exponentially to a rank-$1$ matrix, which contradicts with Lemma~\ref{lem:rank}.
Therefore, the statement holds.
\end{proof}
\begin{remark}
Theorem~\ref{thm:deficient} demonstrates that at least some \mask token representations and real token representations need to be linearly independent so that the rank of $\bs{H}_{\mathcal{M}}^l$ may increase through encoder layers.
As a result, the real token representation $\bs{H}_{\mathcal{R}}^l$ cannot utilize the entire model dimensions and is prone to rank deficiency.
\end{remark}

\section{\method: Masked Autoencoders for MLM}
% \begin{figure}[!t]
\begin{wrapfigure}{r}{0.6\textwidth}
\centering
\includegraphics[width=0.6\columnwidth]{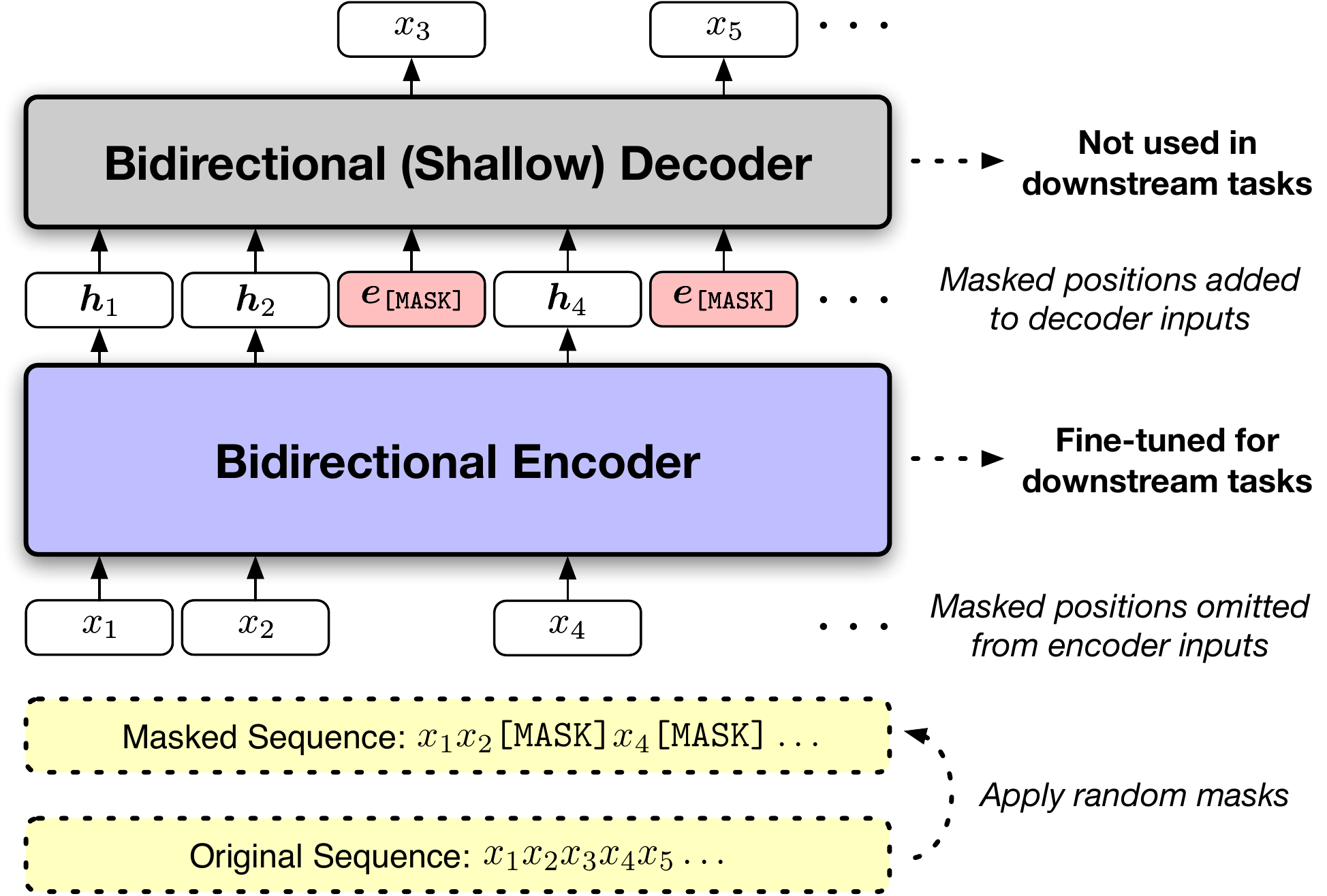}
\caption{Overview of \method. Masked positions are omitted from encoder inputs so that the encoder purely models real tokens. A shallow decoder takes the encoder's output representations and masked positions to predict the original tokens. After pretraining, only the encoder (but not the decoder) is fine-tuned for downstream tasks.}
\label{fig:mae_lm}
\vspace{-.5em}
\end{wrapfigure}
% \end{figure}

To address the representation deficiency issue in MLM, we propose a simple framework \method, which pretrains bidirectional Transformer encoders using the MLM objective, but based on the Masked Autoencoder~\citep{He2022MaskedAA,Liao2022MaskMA} structure.
An overview of \method is shown in Figure~\ref{fig:mae_lm}.
While previous applications of the architecture are mainly motivated by the efficiency benefit of reduced input sequence lengths, its effects on the learned tokens representations have not been thoroughly studied. 
% We also note that \method is not the only solution to the deficiency issue. For example, one may also consider regularizing the \mask token representations to reside in the same space as real token representations to address the issue. We leave alternative explorations for future work.

\textbf{Excluding} \mask{} \textbf{from the Encoder.}
An important design in \method is that \mask tokens are excluded from the encoder inputs so that no model dimensions will be used to represent \mask tokens.
Hence, the representations of real tokens $\bs{H}_{\mathcal{R}}$ can theoretically utilize the entire space $\mathbb{R}^{d}$, which addresses the representation bottleneck in conventional MLM pretraining.
% The encoder can still be aware of which token positions are missing based on the position embeddings.
Specifically, given a masked sequence $\hat{\bs{x}} = [x_1, \dots, \mask_i, \dots, x_n]$ and let $\mathcal{M}$ denote the set of masked positions, the encoder's input sequence $\bs{H}^0$ consists of the sum of token embeddings $\bs{e}_{x_i}$ and position embeddings $\bs{p}_{i}$ at real token positions $i \notin \mathcal{M}$:
$$
\bs{H}^0 = \left\{ \bs{h}_i^0 \right\}_{i \notin \mathcal{M}}, \quad \bs{h}_i^0 = \bs{e}_{x_i} + \bs{p}_{i}.
$$

\textbf{Decoder Configuration.}
In order to predict the original tokens at masked positions, the encoder's output token representations $\bs{H}^L$ are further passed to an auxiliary bidirectional decoder.
While standard Transformer decoders perform unidirectional self-attention (and cross-attention to encoder outputs) for autoregressive decoding, our decoder performs bidirectional self-attention (same as the encoder). It is called a decoder as it takes encoded representations as input and outputs tokens.
The decoder's input sequence $\widehat{\bs{H}}^0$ needs to include the \mask token embedding $\bs{e}_{\mask}$ and position embeddings $\bs{p}_{i}$ so that the decoder is aware of the positions to be predicted:
$$
\widehat{\bs{H}}^0 = \left\{ \widehat{\bs{h}}_i^0 \right\}_{1 \le i \le n}, \quad \widehat{\bs{h}}_i^0 = \begin{cases}
\bs{e}_{\mask} + \bs{p}_{i} & i \in  \mathcal{M} \\
\bs{h}_{i}^L + \bs{p}_{i} & i \notin \mathcal{M}
\end{cases}.
$$
The decoder's output representations will be trained with the MLM objective shown in \Eqref{eq:softmax}.
Since the decoder includes \mask tokens, it is subject to the representation deficiency for modeling real tokens as analyzed in Section~\ref{sec:analysis}.
Therefore, the decoder is \emph{not} used in fine-tuning on downstream tasks.
The decoder is made to be shallow (the decoder depth is $1/6-1/3$ of the encoder in our experiments) not only for pretraining efficiency, but also to push the encoder to learn robust token representations---if the decoder is too strong, it alone may learn the MLM task well without requiring good encoder representations $\bs{H}^L$.

Despite using an additional decoder in pretraining, \method's pretraining time cost is roughly equal to that of conventional MLM pretraining (\eg, RoBERTa).
This is because the exclusion of \mask tokens from the encoder practically reduces its input sequence length (\eg, $15\%$ random masks shorten the encoder's input length by $15\%$), bringing down the encoder's computation cost.

\begin{table*}[t]
\centering
\small 
\caption{
Standard single-task, single-model fine-tuning results (medians over five random seeds) evaluated on GLUE and SQuAD 2.0 development sets. 
Results not available in prior research are marked with ``--''.  
We use Spearman correlation for STS, Matthews correlation for CoLA, and accuracy for the other tasks on GLUE.
The ``AVG'' column contains the averaged results across the eight GLUE tasks. 
All baseline results are taken from public reports unless marked with (Ours). 
% $^\dagger$: \method is pretrained on half of RoBERTa/BART's data under the \textit{large++} setting.
}
\vspace{-0.5em}
\resizebox{\textwidth}{!}{
\begin{tabular}{l*{9}{l}ll}
\toprule
\multirow{2}{*}{\textbf{Model}} & \multicolumn{9}{c}{\textbf{GLUE (Single-Task)}} & \multicolumn{2}{c}{\textbf{SQuAD 2.0}} \\ 
\cmidrule(lr){2-10}\cmidrule(lr){11-12}
& \textbf{MNLI-(m/mm)} & \textbf{QQP} & \textbf{QNLI} & \textbf{SST-2} & \textbf{CoLA} & \textbf{RTE} & \textbf{MRPC} & \textbf{STS-B} & \textbf{AVG} &
\textbf{EM} & \textbf{F1}\\
\midrule
% \multicolumn{12}{l}{\textit{base} setting: $12$-layer, $768$-dimensional Transformer model trained on Wikipedia \& Book Corpus ($16$GB)}  \\ 
\multicolumn{12}{c}{\textit{base} setting: Pretrained on Wikipedia \& Book Corpus ($16$GB)}  \\ 
\midrule
BERT %~\citep{devlin2019bert}
& 84.5/- & 91.3 & 91.7 & 93.2 & 58.9 & 68.6 & 87.3 & {89.5} & 83.1 &  73.7 & 76.3 \\

ALBERT %~\citep{lan2019albert} 
& 81.6/- & -- & -- & 90.3 & -- & -- & -- & -- & -- & 77.1 & 80.0 \\

% XLNet~\citep{yang2019xlnet}
% & 85.8/85.4 & -- & -- & 92.7 & -- &--&--&--&--& 78.5 & 81.3 \\

% DeBERTa~\citep{he2020deberta} & 134M
% & 86.3/86.2 & -- & -- &--&--&--&--&--&-- & 79.3 & 82.5\\

UniLMv2 %~\citep{unilmv2}
& 86.1/86.1 & -- & -- & 93.2 & -- & -- & -- & -- & -- & 80.9 & 83.6\\

TUPE %~\citep{Ke2021RethinkingPE}
& 86.2/86.2 & 91.3 & 92.2 & 93.3 & 63.6 & 73.6 & 89.9 & 89.2 & 84.9 & -- & --\\ 
% \midrule

RoBERTa %~\citep{liu2019roberta}
& 84.7/- & -- & -- & 92.7 & -- & -- & -- & -- & -- & -- & 79.7\\ 

RoBERTa (Ours)
& 85.9/85.8 & \textbf{91.6} & 92.3 & 93.7 & \textbf{64.3} & 75.5 & 88.7 & 89.5 & 85.2 & 78.3 & 81.5 \\

\method
& \textbf{87.2/87.1} & \textbf{91.6} & \textbf{92.9} & \textbf{93.8} & 63.1 & \textbf{79.1} & \textbf{90.2} & \textbf{90.9} & \textbf{86.1} & \textbf{81.1} & \textbf{84.1}

\\ \midrule

% \multicolumn{12}{l}{\textit{base++} setting: $12$-layer, $768$-dimensional Transformer model trained on larger pretraining corpora ($160$GB)} \\ 
\multicolumn{12}{c}{\textit{base++} setting: Pretrained on larger pretraining corpora ($160$GB)} \\
\midrule
ALBERT %~\citep{lan2019albert} 
& 82.4/- & -- & -- & 92.8 & -- & -- & -- & -- & -- & 76.3 & 79.1 \\
% XLNet~\citep{yang2019xlnet}
% & 86.8/- & 91.4 & 91.7 & 94.7 & 60.2 & 74.0 & 88.2 & 89.5 & 84.6 & 80.2 &--\\
RoBERTa %~\citep{liu2019roberta}
& 87.6/- & \textbf{91.9} & 92.8 & 94.8 & 63.6 &  78.7 & 90.2 &  {91.2} & 86.4 & 80.5 & 83.7\\
UniLMv2 %~\citep{unilmv2}
& 88.5/- & 91.7 & 93.5 & \textbf{95.1} & 65.2 & 81.3 & \textbf{91.8}  & 91.0 & 87.1 & 83.3 & 86.1\\
% DeBERTa~\citep{he2020deberta} & 134M
% &  88.8/88.5 &--&--&--&--&--&--&--&--& 83.1 & 86.2 \\
% CLEAR~\cite{wu2020clear}
% &  86.7/- & 90.0 & 92.9 & 94.5 & 64.3 & 78.3 & 89.2 & 89.8 & 85.7 & -- & -- \\
% \hline
\method  
& \textbf{89.1}/\textbf{89.1}
 & 91.7 & \textbf{93.8} & \textbf{95.1} & \textbf{65.9} & \textbf{85.2} & 90.2 & \textbf{91.6} & \textbf{87.8}
& \textbf{83.5} & \textbf{86.5}
 \\ 
\bottomrule
\end{tabular}
}
\vspace{-1.5em}
\label{tab:main_res}
\end{table*}

\section{Experiments}
\label{sec:exp}
\subsection{Pretraining and Evaluation Setup}

\textbf{Pretraining Settings.} 
We evaluate \method mainly under the base model scale for two pretraining settings: \textit{base} and \textit{base++}.
Both settings pretrain $12$-layer Transformers with $768$ model dimensions.
The \textit{base} setting uses $16$GB training corpus following BERT~\citep{devlin2019bert} while the \textit{base++} setting uses $160$GB training corpus following RoBERTa~\citep{liu2019roberta}.
The details can be found in Appendix~\ref{app:hyper}.
Additional results of larger model scales are presented in Appendix~\ref{app:more_eval}.
All settings use the MLM objective for pretraining without any sequence-level tasks.

% The \textit{large++} setting extends the \textit{base++} setting by scaling up the encoder architecture to $24$ layers and $1,024$ model dimensions.
% The decoder is still a $4$-layer Transformer with the same model dimensions as the encoder.
% Due to the high cost of training large models, we train for $1$M steps (half of the \textit{base++} setting) with $2,048$ sequences per batch, which amounts to $2$B samples in total.
% Note that this is also half of the pretraining data used in RoBERTa~\citep{liu2019roberta} and BART~\citep{Lewis2020BARTDS}.

\textbf{Downstream Tasks and Fine-Tuning.} 
We evaluate the pretrained models on the GLUE~\citep{wang2018glue} and SQuAD 2.0~\citep{Rajpurkar2018KnowWY} benchmarks.  
The details about GLUE tasks can be found in Appendix~\ref{app:glue}.
We adopt standard fine-tuning as in BERT~\citep{devlin2019bert} and RoBERTa~\citep{liu2019roberta}.
The hyperparameter search space for fine-tuning can be found in Appendix~\ref{app:hyper}.
All reported fine-tuning results are the medians of five random seeds on GLUE and SQuAD, following previous studies~\citep{liu2019roberta}.
Additional few-shot and zero-shot evaluation results are presented in Appendix~\ref{app:more_eval}.

\textbf{Baselines.} 
We compare with various baselines pretrained by MLM (and variants of MLM) under each setting, including BERT~\citep{devlin2019bert}, ALBERT~\citep{lan2019albert}, UniLMv2~\citep{unilmv2}, TUPE~\citep{Ke2021RethinkingPE}, and RoBERTa~\citep{liu2019roberta}.
The baseline results, unless marked by ``(Ours)'', are taken from the original papers.
% except for the result of BERT$_\text{base}$ which is reported by \cite{unilmv2}.
To eliminate the performance difference due to implementation details and computation environment, we also pretrain and fine-tune RoBERTa (the most important baseline) under exactly the same \textit{base} pretraining setting with \method, which is denoted with ``RoBERTa (Ours)''.

% \textbf{Implementation Details} can be found in Appendix~\ref{app:implementation}. 

\subsection{Overall Results}

Table~\ref{tab:main_res} shows the results under the two base model pretraining settings on the GLUE and SQuAD 2.0 benchmarks.
Overall, \method outperforms previous models pretrained by MLM and its variants. 
Notably, the gains of \method over RoBERTa (the standard MLM pretrained model) are quite consistent across tasks and pretraining settings. 

\begin{figure}[!t]
\centering
\begin{minipage}{0.485\textwidth}
\begin{subfigure}[t]{0.48\textwidth}
\centering
\includegraphics[width=\textwidth]{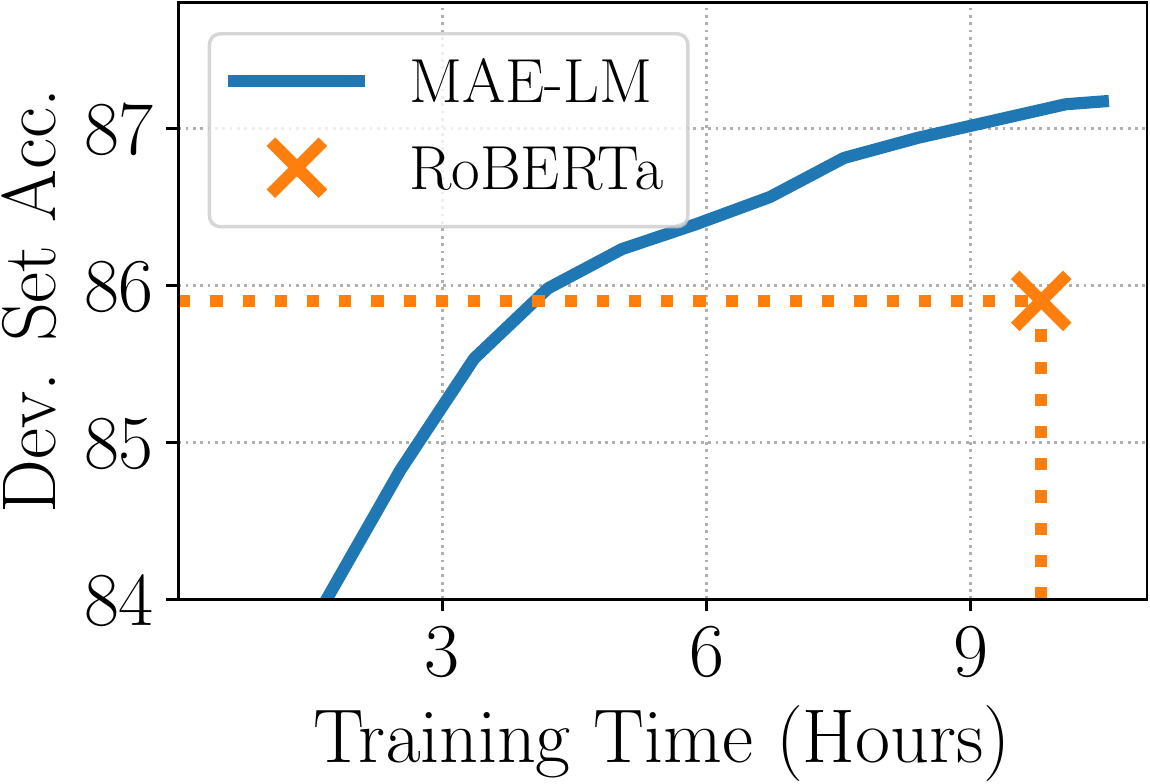}
\caption{MNLI-m}
\vspace{-.5em}
% \label{fig:efficiency}
\end{subfigure}
~
\centering
\begin{subfigure}[t]{0.48\textwidth}
\centering
\includegraphics[width=\linewidth]{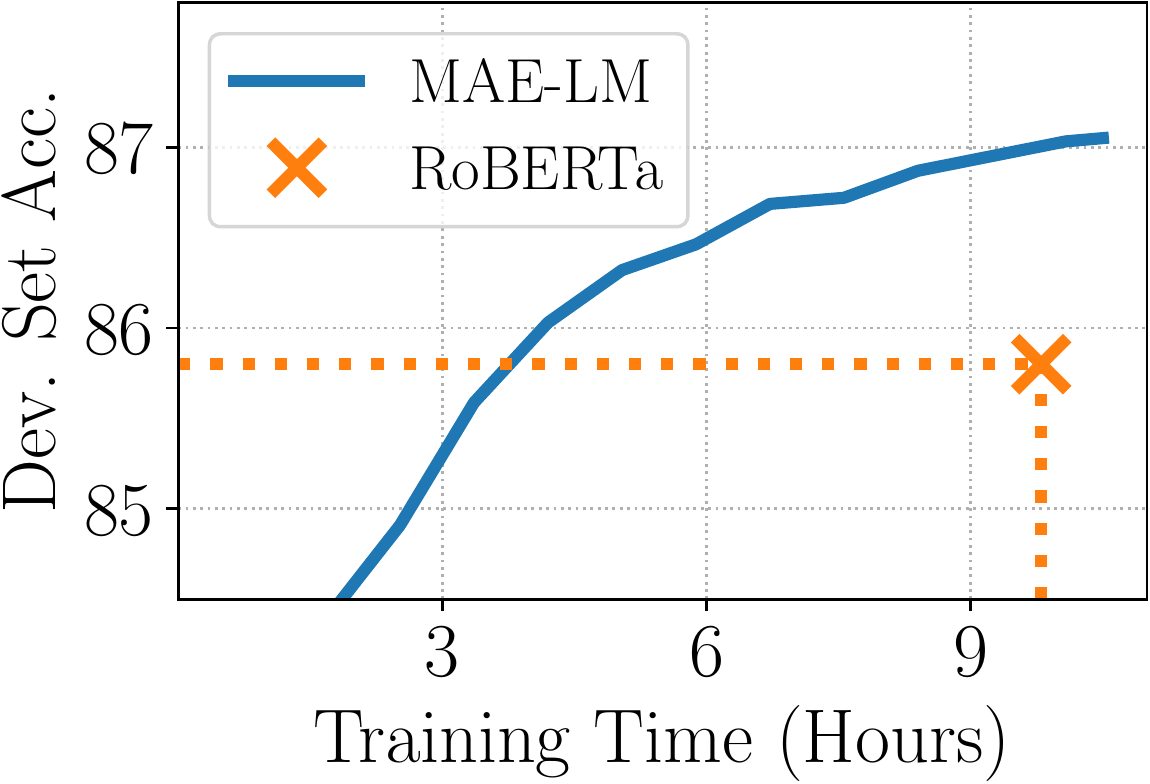}
\caption{MNLI-mm}
\vspace{-.5em}
% \label{fig:efficiency}
\end{subfigure}
\caption{MNLI dev set accuracy by fine-tuning intermediate \method{}$_{\text{base}}$ checkpoints at different time steps.
We also mark the pretraining time and final performance of RoBERTa (Ours).
% Training is conducted on $64$ A100 GPUs.
}
\label{fig:efficiency}
% \vspace{-.5em}
\end{minipage}
\hfill
\begin{minipage}{0.495\textwidth}
% \vspace{-1em}
\begin{subfigure}[t]{0.49\textwidth}
\centering
\includegraphics[width=\textwidth]{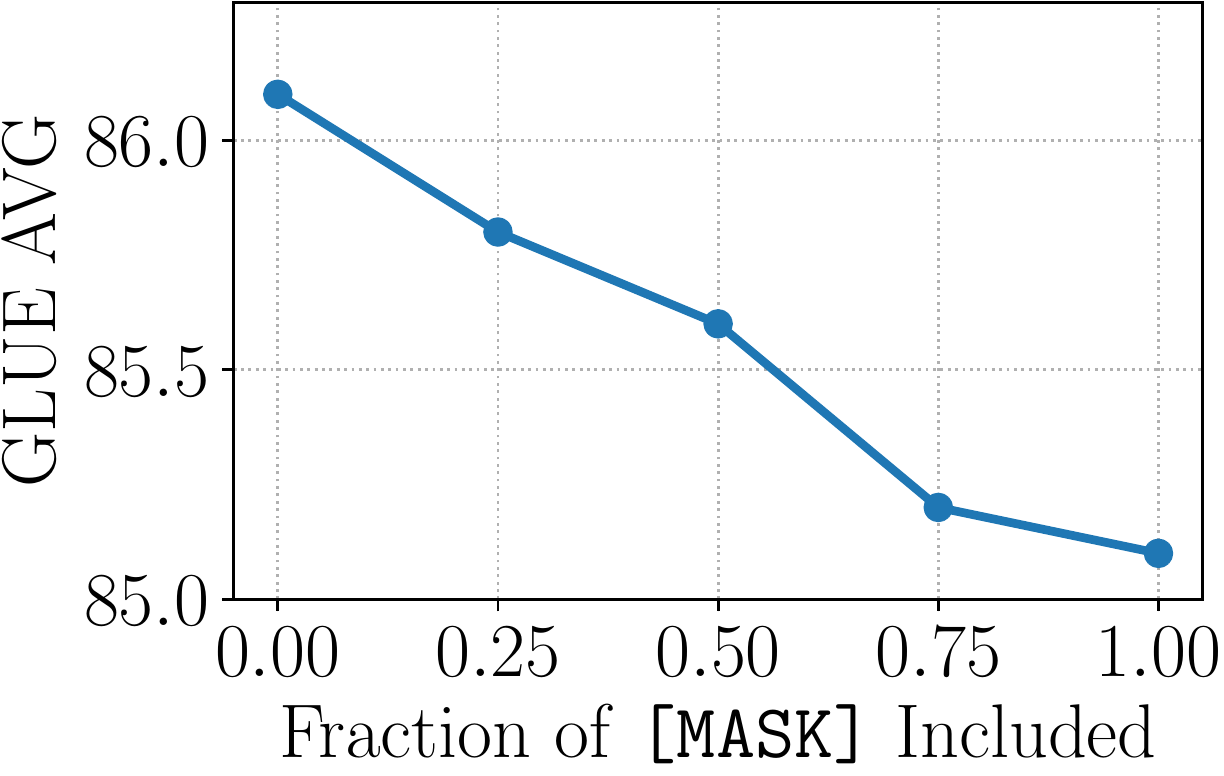}
\caption{GLUE}
\vspace{-.5em}
% \label{fig:pretrain_finetune_dim}
\end{subfigure}
~
\centering
\begin{subfigure}[t]{0.465\textwidth}
\centering
\includegraphics[width=\textwidth]{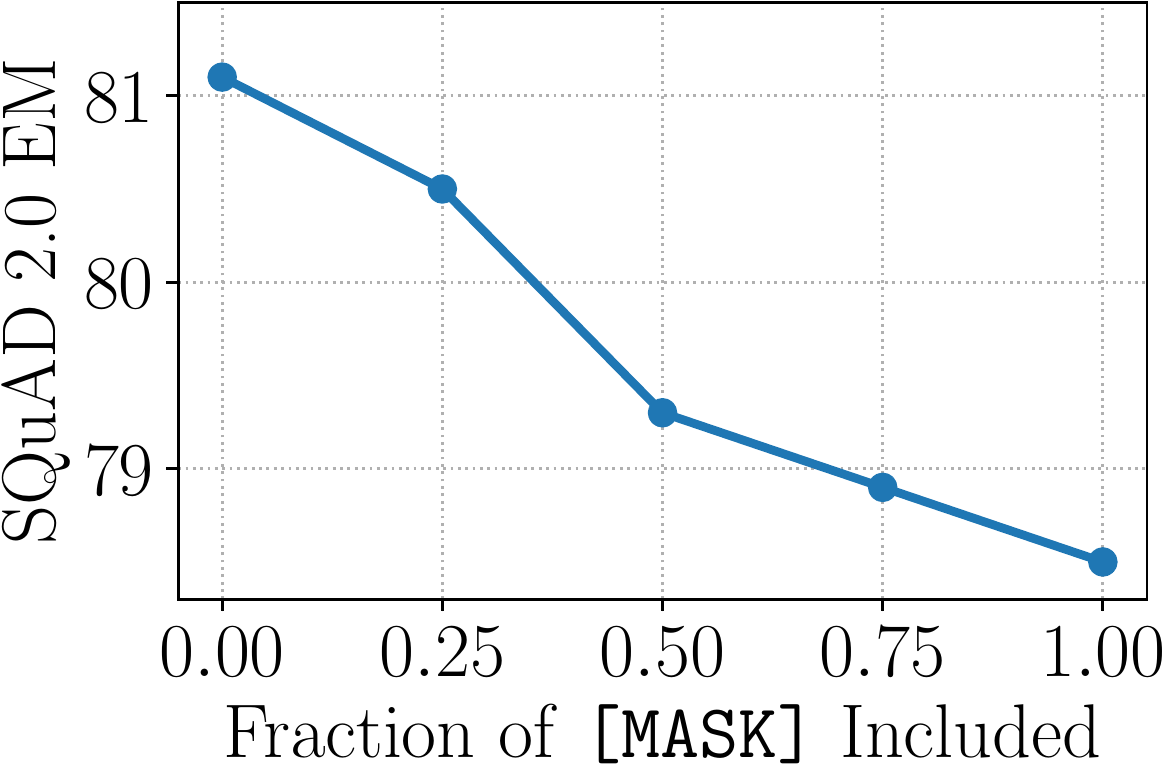}
\caption{SQuAD}
\vspace{-.5em}
% \label{fig:mask_real_dim}
\end{subfigure}
\vspace{.2em}
\caption{GLUE average scores and SQuAD EM scores when different fractions of \mask tokens are included in the input sequences to the encoder of \method{}$_{\text{base}}$. 
% Including more \mask tokens in the inputs worsens the performance.
}
\label{fig:transition}
\end{minipage}
\vspace{-1em}
\end{figure}

\textbf{Pretraining Efficiency.} 
In Figure~\ref{fig:efficiency}, we illustrate \method{}$_{\text{base}}$'s fine-tuning performance when pretrained for different amounts of time.
\method takes slightly more time than RoBERTa when trained on the same amount of data, but to reach RoBERTa's MNLI accuracy, \method only needs about $40\%$ of its pretraining time.

\subsection{Ablation Studies}
\label{sec:ablation}
Table~\ref{tab:ablation} shows several groups of ablations to study the important components in \method.

\textbf{Naive Baselines.} 
To validate that the effectiveness of \method is not from simply using the additional decoder in pretraining, we first compare two naive baselines: (1) the standard MLM (enc. w. \mask) and (2) adding the same decoder used in \method but still pretrains the encoder with \mask tokens included in inputs (enc. w. \mask + dec.). The two baselines perform similarly, confirming that naively using the decoder does not benefit downstream tasks. 

% \begin{table}[t]
\begin{wraptable}[24]{r}{0.6\textwidth}
\centering
\caption{
Ablations evaluated with GLUE average scores. 
The setting of \method{}$_\text{base}$ is: enc. w/o. \mask; aligned position encoding w. relative position encoding; bi. self-attention; $4$ layer, $768$ dimension.
}
\vspace{-.5em}
% \resizebox{\columnwidth}{!}{
\small
\begin{tabular}{*{4}{l}}
\toprule
\textbf{Group} & \textbf{Setting} & \textbf{GLUE} \\
\midrule

\textbf{Original} & \method{}$_\text{base}$ & 86.1 \\ 
\midrule 

\textbf{Naive} & enc. w. \mask (\ie, MLM) & 85.2 \\
& enc. w. \mask + dec. & 85.1 \\

\midrule 
\textbf{Handling} & enc. w. \mask, dec. resets \mask & 85.9 \\
\mask & random replace w. real token &  85.1 \\
\midrule

\textbf{Position} & misaligned position encoding & 86.0 \\
\textbf{Encoding} & no relative position encoding & 86.1 \\
\midrule

\textbf{Decoder} &  bi. self-attention $+$ cross-attention & 85.4 \\
\textbf{Attention} & uni. self-attention $+$ cross-attention & 85.5 \\
& cross-attention & 86.0 \\
\midrule

\textbf{Decoder} & 2 layer, 768 dimension & 85.8 \\
\textbf{Size} & 6 layer, 768 dimension & 84.8 \\
& 4 layer, 512 dimension & 85.8 \\
& 4 layer, 1024 dimension & 85.5 \\
\bottomrule
\end{tabular}
% }
\vspace{-.5em}
\label{tab:ablation} 
% \end{table}
\end{wraptable}

\textbf{Handling} \mask{}\textbf{.}
% \method excludes \mask tokens from encoder's inputs. 
We compare with other ways of handling \mask tokens in the encoder: 
(1) including \mask in encoder's inputs but resetting \mask token positions to the \mask token embedding $\bs{e}_{\mask}$ in decoder's inputs (enc. w. \mask, dec. resets \mask)
and (2) randomly replacing \mask tokens in encoder's inputs with other real tokens from the vocabulary (random replace w. real token).
The first variation improves the performance over vanilla MLM, showing that when \mask is present in the encoder, resetting the \mask token embeddings in the decoder helps.
This validates our analysis in Theorem~\ref{thm:deficient} that the rank increase of \mask token representations is the main cause of representation deficiency, and preventing \mask token representations in the encoder from being explicitly trained is one way to mitigate the issue, though it is slightly worse than completely excluding \mask from the encoder.
The second variation demonstrates that replacing \mask tokens with random real tokens, though avoiding the representation deficiency problem, worsens the context quality in pretraining.
On balance, it does not yield better results than MLM.

\textbf{Position Encoding.} 
\method aligns the position encoding based on each token's position in the original sequence, and the position indices of masked positions are skipped. 
\method also uses relative position encoding~\citep{raffel2019t5}.
We create two ablations: 
(1) apply consecutive position encoding that does not reflect the masked positions (misaligned position encoding); 
and (2) remove the relative position encoding from \method (no relative position encoding).
Overall, the variations in position encoding do not result in notable performance differences.

\textbf{Decoder Attention.} 
\method uses bidirectional self-attention in the decoder. 
We compare with other decoder attention configurations:
(1) additionally use cross-attention to encoder's output representations (bi. self-attention $+$ cross-attention);
(2) use unidirectional self-attention and cross-attention for autoregressive decoding of the entire sequence, similar to BART~\citep{Lewis2020BARTDS} (uni. self-attention $+$ cross-attention);
and (3) only use cross-attention (cross-attention).
Bidirectional self-attention only in the decoder is simple and performs the best.

\textbf{Decoder Size.} 
\method uses a $4$-layer decoder with the same dimensionality ($768$) as the encoder. 
We experiment with other decoder sizes (when the decoder's dimension is different from the encoder, we add a linear projection between the encoder's output and the decoder's input):
(1) $2$-layer, $768$ dimension;
(2) $6$-layer, $768$ dimension;
(3) $4$-layer, $512$ dimension;
and (4) $4$-layer, $1024$ dimension.
Overall, using a relatively small decoder yields good results.

\textbf{Gradual Transition from \method to Standard MLM.} 
To further examine the empirical benefits of excluding \mask tokens from \method's encoder, we create a set of ``stepping stones'' between \method and standard MLM as follows:
Out of all \mask tokens in the sequence $\hat{\bs{x}}$, we include a fraction ($\delta$) of them in the encoder's input sequence. 
The rest ($1-\delta$) of \mask tokens are excluded from the encoder's input and added to the decoder's input.
Then $\delta=0$ represents \method, and $\delta=1$ refers to the standard MLM\footnote{Although standard MLM (\ie, RoBERTa) does not have the decoder, its fine-tuning results are almost the same as $\delta=1$ (with the decoder) as shown in Table~\ref{tab:ablation}. 
% This validates that the performance is not mainly affected by the usage of the decoder, but by the existence of \mask tokens in the input.
}.
Figure~\ref{fig:transition} illustrates the fine-tuning performance changes on GLUE and SQuAD as we transition from \method to standard MLM.
There is a clear trend that including a higher portion of \mask tokens in the encoder degrades its performance.

\vspace{-.5em}
\subsection{\method Improves Model Dimension Utilization}

\begin{wrapfigure}{r}{0.57\textwidth}
\centering
\begin{subfigure}[t]{0.276\columnwidth}
\centering
\includegraphics[width=\linewidth]{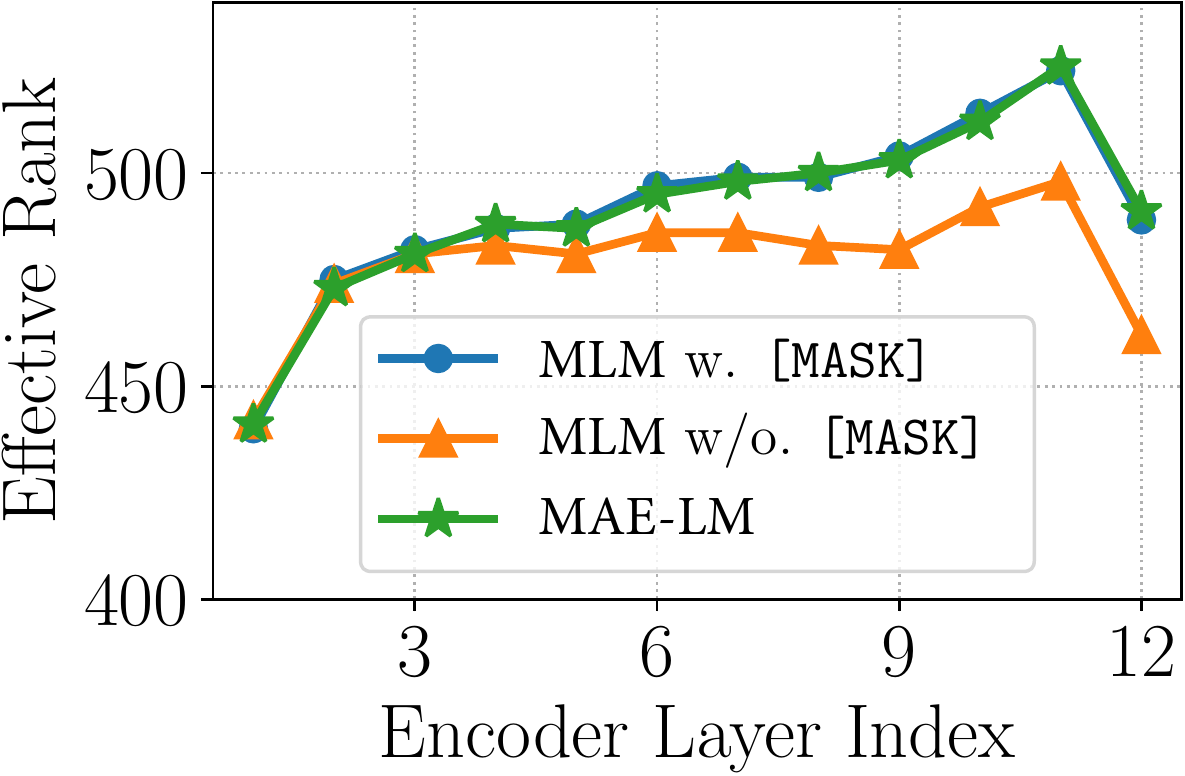}
\caption{}
% \vspace{-.5em}
\label{fig:eff_dim_pretrain_finetune_with_mae}
\end{subfigure}
~
\centering
\begin{subfigure}[t]{0.276\columnwidth}
\centering
\includegraphics[width=\linewidth]{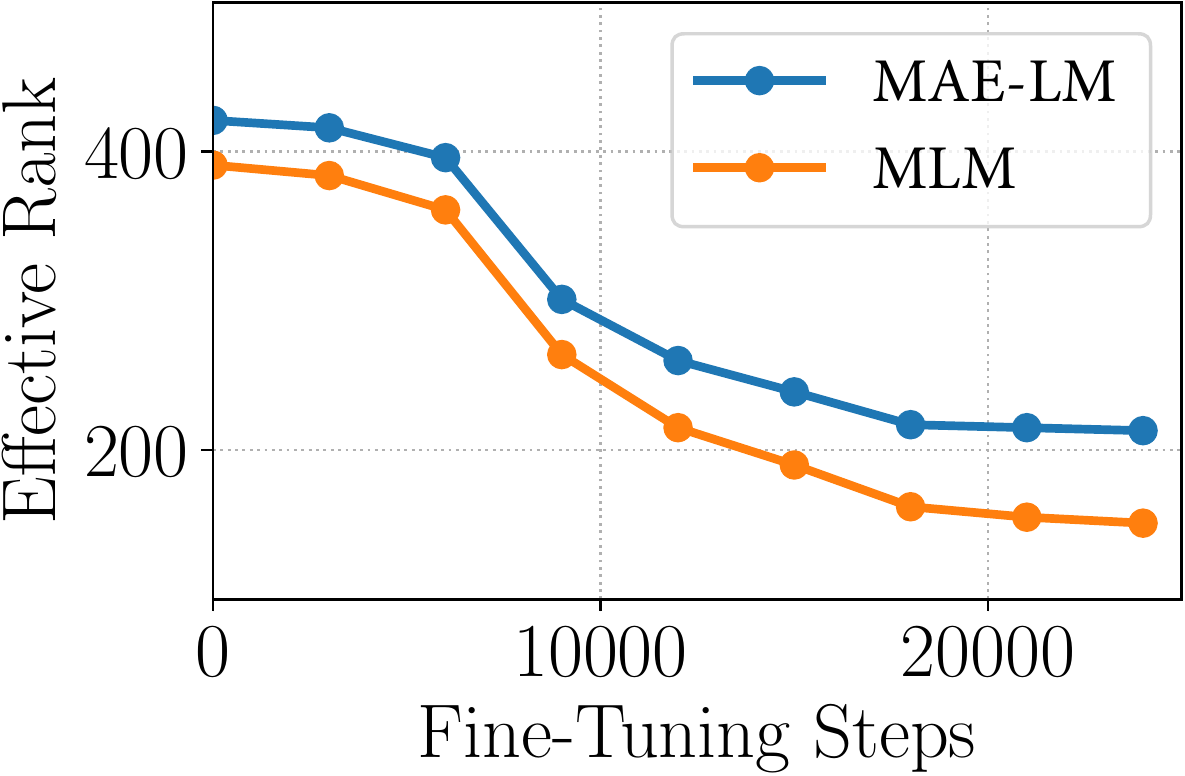}
\caption{}
% \vspace{-.5em}
\label{fig:eff_dim_finetune}
\end{subfigure}
\caption{(a) MAE-LM effectively closes the rank gap in vanilla MLM with inputs containing or not containing \texttt{[MASK]}. (b) During fine-tuning, the advantage in effective rank of \method over vanilla MLM still holds.}
\vspace{-.5em}
\end{wrapfigure}

% \begin{figure}
% % \begin{wrapfigure}{r}{0.57\textwidth}
% % \centering
% % \begin{subfigure}[t]{0.276\columnwidth}
% \centering
% \includegraphics[width=.7\linewidth]{Figures/eff_dim_pretrain_finetune_with_mae.pdf}
% \caption{Responses to \textbf{Reviewers MGoc and Ge8R}: The effective rank throughout encoder layers for (1) RoBERTa when the inputs contain \texttt{[MASK]} (Inputs w. \texttt{[MASK]}); (2) RoBERTa when the inputs are all real tokens (Inputs w/o. \texttt{[MASK]}); (3) \method. MAE-LM effectively closes the rank gap in vanilla MLM with inputs containing or not containing \texttt{[MASK]}.}
% % \vspace{-.5em}
% % \end{wrapfigure}
% \end{figure}
% % \end{figure}

% \begin{figure}
% % \begin{wrapfigure}{r}{0.57\textwidth}
% % \centering
% % \begin{subfigure}[t]{0.276\columnwidth}
% \centering
% \includegraphics[width=.7\linewidth]{Figures/eff_dim_finetune.pdf}
% \caption{Responses to \textbf{Reviewer VpyK}: (On MNLI) The effective rank of the output representations of \method and RoBERTa (Ours) decreases during fine-tuning, probably because the model ``forgets'' some general knowledge learned in pretraining when trained to perform a specific task.}
% % \vspace{-.5em}
% % \end{wrapfigure}
% \end{figure}
% % \end{figure}

To further validate the effectiveness of \method in improving the utilization of model dimensions for representing real tokens, we compute the $0.9$-effective rank of the encoder's token representations $\rank_{0.9}(\bs{H}^L)$ both after pretraining (evaluated on the validation set of the pretraining corpus) and after further fine-tuning on MNLI.
Figure~\ref{fig:eff_dim_pretrain_finetune_with_mae} shows the effective rank throughout encoder layers for (1) RoBERTa when the inputs contain \texttt{[MASK]} (MLM w. \texttt{[MASK]}); (2) RoBERTa when the inputs are all real tokens (MLM w/o. \texttt{[MASK]}); and (3) \method. MAE-LM closes the gap caused by \texttt{[MASK]} tokens in vanilla MLM pretraining.
Figure~\ref{fig:eff_dim_finetune} further validates that MAE-LM maintains its advantage in the effective rank of real token representations during fine-tuning on MNLI.
This highlights the importance of addressing the representation deficiency issue in pretraining: 
The model dimensions not pretrained to represent real tokens may not be easily leveraged in fine-tuning.

\section{Related Work}

\textbf{Language Model Pretraining.}
Various pretraining methods have been proposed for different purposes:
Standard autoregressive language modeling~\citep{Brown2020LanguageMA,radford2018improving,radford2019language} is commonly used to pretrain generative models that excel in text generation;
MLM~\citep{devlin2019bert,liu2019roberta} is prominently used to pretrain bidirectional text encoders to achieve superior performance for language understanding;
Other language modeling objectives~\citep{Lewis2020BARTDS,raffel2019t5} are designed to build sequence-to-sequence models that serve as both text generators and text encoders.
As one of the most prominent pretraining approaches, MLM has stimulated many follow-up developments for pretraining bidirectional encoders~\citep{unilmv2,clark2020electra,Gong2023ModelGeneratedPS,he2020deberta,Joshi2019SpanBERTIP,lan2019albert,Liao2022MaskMA,meng2021coco,Meng2022PretrainingTE,Sanh2019DistilBERTAD,yang2019xlnet}.
Remarkably, the idea of MLM is highly generalizable to different domains~\citep{Bao2021BEiTBP,Dosovitskiy2020AnII,Hou2022GraphMAESM,Tong2022VideoMAEMA,Wang2022BEVTBP,Xie2022SimMIMAS} and leads to developments of unified pretraining frameworks for different modalities~\citep{Baevski2022EfficientSL,Baevski2022data2vecAG}.
Given the broad impact of MLM, our analyses of representation deficiency in MLM may provide insights for future developments of pretraining algorithms in various fields.

\textbf{Study of Pretrained Models' Representations.}
The powerful language representations learned by pretrained models have driven a series of studies to understand how linguistic knowledge is acquired through pretraining.
Previous work studying the token representations in pretrained encoders has found that deeper layers generate more contextualized token representations~\citep{Ethayarajh2019HowCA}, and these representations encode syntax structures~\citep{Goldberg2019AssessingBS,Hewitt2019ASP} and fine-grained word senses~\citep{Coenen2019VisualizingAM}, offering supporting evidence for the effectiveness of pretrained models in downstream tasks.
The success of learning such linguistic patterns is usually attributed to the self-attention mechanism which automatically learns to extract useful features through pretraining~\citep{clark2019does}.
Furthermore, different types of linguistic information are shown to be represented in a hierarchical way from shallower to deeper layers, reflecting the traditional NLP pipeline~\citep{Tenney2019BERTRT,Tenney2019WhatDY}.
There have also been prior efforts that investigate the limitations of pretrained models' representations.
It has been revealed that the contextualized embedding space learned by pretrained models is generally anisotropic~\citep{cai2020isotropy,Li2020OnTS} and is subject to a degeneration problem that token representations tend to be distributed into a narrow cone~\citep{Gao2019RepresentationDP}.
\cite{Gong2019EfficientTO} identify that self-attention in Transformers tends to assign higher weights to local tokens as well as the starting token, which motivates the design of a progressive stacking algorithm for efficient pretraining.
In this work, we investigate a previously unknown issue regarding MLM-pretrained models' representations that hinders the model's expressiveness on input sequences without \mask tokens. 
Our findings contribute a new perspective to understanding the limitations of representations in pretrained models.

\section{Conclusion}

% \textbf{Discussions.}
% Since the advent of BERT~\citep{devlin2019bert}, there have been numerous developments in new pretraining and fine-tuning methods aiming to improve the effectiveness of pretrained models in downstream tasks.
% The advantages of these proposed methods, however, are mostly demonstrated via empirical evidence alone, and our understanding of why certain methods are better than the others remains limited.
% Our analyses in this work may advance the understanding of the benefits of some prominent methods:
% ELECTRA~\citep{clark2020electra} fills \mask positions with real tokens; therefore, the encoder does not suffer from the representation deficiency issue. 
% Different from the ablation in Section~\ref{sec:ablation} where we randomly sample real tokens to fill \mask, ELECTRA employs an MLM model to sample replaced tokens which are generally plausible alternatives to the original tokens, thus better preserving the contexts in pretraining.
% These designs may help partially explain the effectiveness of ELECTRA.
% Prompt-based methods~\citep{gao2021making,Schick2021ExploitingCF} adapt pretrained MLM models to downstream tasks by creating prompt templates that convert the target task into a masked token prediction problem.
% This helps mitigate the representation deficiency issue that occurs in standard fine-tuning of MLM models as \mask tokens are also introduced into downstream data, resulting in more model dimensions being utilized.

\textbf{Limitations.}
The focus of our work is on MLM and our analyses do not apply to other pretraining settings not using \mask tokens, and we discuss potential implications of our findings on autoregressive language models in Appendix~\ref{app:discuss}. 
% (\eg, autoregressive pretraining). 
While the current large language models are mostly autoregressive models, we believe that text encoder models still have important and wide applications in NLP, including but not limited to (1) Non-generation tasks. Many natural language understanding tasks do not have to be modeled autoregressively, for which encoder-only models are generally more parameter efficient and effective~\citep{Zhong2023CanCU}. (2) Retrieval-augmented text generation~\citep{Lewis2020RetrievalAugmentedGF}, which typically uses an encoder for retrieval to enhance the generator’s factualness. (3) Reward models in reinforcement learning from human feedback (RLHF) can use encoder models~\citep{Song2023RewardCI}.
Empirically, we mainly compare with models pretrained by MLM and its simple variants and do not include all state-of-the-art models, as they typically require integrating multiple pretraining strategies and/or architecture changes~\citep{He2021DeBERTaV3ID}.

\textbf{Conclusion.} 
% and Future Work.}
In this work, we investigate the discrepancy caused by \mask tokens in MLM pretraining and demonstrate for the first time that this will necessarily result in real token representations being rank-deficient, thus limiting the model's expressiveness on real data without \mask. 
We propose a simple method \method that excludes \mask tokens from the encoder in pretraining to address the representation deficiency issue.
We empirically show that \method improves the utilization of model dimensions for representing real tokens in pretraining and downstream tasks.
\method consistently outperforms MLM-pretrained models on the GLUE and SQuAD benchmarks across multiple pretraining settings. 
% Our findings also facilitate a principled understanding of the advantages of certain pretrained models and training methods (\eg, ELECTRA and prompt-based learning; detailed discussions in Appendix~\ref{app:discuss}).

% Our proposed \method method is certainly not the only solution to address the identified issue.
% Promising future directions include: 
% designing regularization methods in pretraining that prevent \mask tokens from occupying a separate representation subspace from real tokens;
% exploring new self-supervised objectives that do not require using artificial tokens like \mask;
% and removing/reducing the need for an auxiliary decoder in pretraining that is not used in downstream tasks.

\subsubsection*{Acknowledgments}
Research was supported in part by U.S. National Science Foundation IIS-19-56151, the Molecule Maker Lab Institute: An AI Research Institutes program supported by NSF under Award No. 2019897, and the Institute for Geospatial Understanding through an Integrative Discovery Environment (I-GUIDE) by NSF under Award No. 2118329. Yu Meng was supported by a Google PhD Fellowship.

\bibliography{ref}

\begin{thebibliography}{72}
\providecommand{\natexlab}[1]{#1}
\providecommand{\url}[1]{\texttt{#1}}
\expandafter\ifx\csname urlstyle\endcsname\relax
  \providecommand{\doi}[1]{doi: #1}\else
  \providecommand{\doi}{doi: \begingroup \urlstyle{rm}\Url}\fi

\bibitem[Ansuini et~al.(2019)Ansuini, Laio, Macke, and
  Zoccolan]{Ansuini2019IntrinsicDO}
Alessio Ansuini, Alessandro Laio, Jakob~H. Macke, and Davide Zoccolan.
\newblock Intrinsic dimension of data representations in deep neural networks.
\newblock In \emph{NeurIPS}, 2019.

\bibitem[Baevski et~al.(2022)Baevski, Hsu, Xu, Babu, Gu, and
  Auli]{Baevski2022data2vecAG}
Alexei Baevski, Wei-Ning Hsu, Qiantong Xu, Arun Babu, Jiatao Gu, and Michael
  Auli.
\newblock data2vec: A general framework for self-supervised learning in speech,
  vision and language.
\newblock In \emph{ICML}, 2022.

\bibitem[Baevski et~al.(2023)Baevski, Babu, Hsu, and
  Auli]{Baevski2022EfficientSL}
Alexei Baevski, Arun Babu, Wei-Ning Hsu, and Michael Auli.
\newblock Efficient self-supervised learning with contextualized target
  representations for vision, speech and language.
\newblock In \emph{ICML}, 2023.

\bibitem[Bao et~al.(2020)Bao, Dong, Wei, Wang, Yang, Liu, Wang, Piao, Gao,
  Zhou, and Hon]{unilmv2}
Hangbo Bao, Li~Dong, Furu Wei, Wenhui Wang, Nan Yang, Xiaodong Liu, Yu~Wang,
  Songhao Piao, Jianfeng Gao, Ming Zhou, and Hsiao-Wuen Hon.
\newblock {UniLMv2}: Pseudo-masked language models for unified language model
  pre-training.
\newblock In \emph{ICML}, 2020.

\bibitem[Bao et~al.(2022)Bao, Dong, and Wei]{Bao2021BEiTBP}
Hangbo Bao, Li~Dong, and Furu Wei.
\newblock {BEiT}: {BERT} pre-training of image transformers.
\newblock In \emph{ICLR}, 2022.

\bibitem[Bender et~al.(2021)Bender, Gebru, McMillan-Major, and
  Shmitchell]{Bender2021OnTD}
Emily~M. Bender, Timnit Gebru, Angelina McMillan-Major, and Shmargaret
  Shmitchell.
\newblock On the dangers of stochastic parrots: Can language models be too big?
\newblock In \emph{ACM Conference on Fairness, Accountability, and
  Transparency}, 2021.

\bibitem[Bentivogli et~al.(2009)Bentivogli, Clark, Dagan, and
  Giampiccolo]{RTE-5}
Luisa Bentivogli, Peter Clark, Ido Dagan, and Danilo Giampiccolo.
\newblock The fifth pascal recognizing textual entailment challenge.
\newblock In \emph{TAC}, 2009.

\bibitem[Bhojanapalli et~al.(2020)Bhojanapalli, Yun, Rawat, Reddi, and
  Kumar]{Bhojanapalli2020LowRankBI}
Srinadh Bhojanapalli, Chulhee Yun, Ankit~Singh Rawat, Sashank~J. Reddi, and
  Sanjiv Kumar.
\newblock Low-rank bottleneck in multi-head attention models.
\newblock In \emph{ICML}, 2020.

\bibitem[Bommasani et~al.(2021)Bommasani, Hudson, Adeli, Altman, Arora, von
  Arx, Bernstein, Bohg, Bosselut, Brunskill,
  et~al.]{bommasani2021opportunities}
Rishi Bommasani, Drew~A Hudson, Ehsan Adeli, Russ Altman, Simran Arora, Sydney
  von Arx, Michael~S Bernstein, Jeannette Bohg, Antoine Bosselut, Emma
  Brunskill, et~al.
\newblock On the opportunities and risks of foundation models.
\newblock \emph{ArXiv}, abs/2108.07258, 2021.

\bibitem[Brown et~al.(2020)Brown, Mann, Ryder, Subbiah, Kaplan, Dhariwal,
  Neelakantan, Shyam, Sastry, Askell, et~al.]{Brown2020LanguageMA}
Tom Brown, Benjamin Mann, Nick Ryder, Melanie Subbiah, Jared~D Kaplan, Prafulla
  Dhariwal, Arvind Neelakantan, Pranav Shyam, Girish Sastry, Amanda Askell,
  et~al.
\newblock Language models are few-shot learners.
\newblock In \emph{NeurIPS}, 2020.

\bibitem[Cai et~al.(2021)Cai, Huang, Bian, and Church]{cai2020isotropy}
Xingyu Cai, Jiaji Huang, Yuchen Bian, and Kenneth Church.
\newblock Isotropy in the contextual embedding space: Clusters and manifolds.
\newblock In \emph{ICLR}, 2021.

\bibitem[Cer et~al.(2017)Cer, Diab, Agirre, Lopez-Gazpio, and Specia]{STS-B}
Daniel Cer, Mona Diab, Eneko Agirre, I{\~n}igo Lopez-Gazpio, and Lucia Specia.
\newblock Semeval-2017 task 1: Semantic textual similarity multilingual and
  crosslingual focused evaluation.
\newblock In \emph{International Workshop on Semantic Evaluation (SemEval)},
  2017.

\bibitem[Clark et~al.(2019)Clark, Khandelwal, Levy, and Manning]{clark2019does}
Kevin Clark, Urvashi Khandelwal, Omer Levy, and Christopher~D Manning.
\newblock What does {BERT} look at? an analysis of {BERT}’s attention.
\newblock In \emph{BlackboxNLP}, 2019.

\bibitem[Clark et~al.(2020)Clark, Luong, Le, and Manning]{clark2020electra}
Kevin Clark, Minh-Thang Luong, Quoc~V Le, and Christopher~D Manning.
\newblock {ELECTRA}: Pre-training text encoders as discriminators rather than
  generators.
\newblock In \emph{ICLR}, 2020.

\bibitem[Coenen et~al.(2019)Coenen, Reif, Yuan, Kim, Pearce, Vi{\'e}gas, and
  Wattenberg]{Coenen2019VisualizingAM}
Andy Coenen, Emily Reif, Ann Yuan, Been Kim, Adam Pearce, Fernanda~B.
  Vi{\'e}gas, and Martin Wattenberg.
\newblock Visualizing and measuring the geometry of {BERT}.
\newblock In \emph{NeurIPS}, 2019.

\bibitem[Dagan et~al.(2005)Dagan, Glickman, and Magnini]{RTE-1}
Ido Dagan, Oren Glickman, and Bernardo Magnini.
\newblock The pascal recognising textual entailment challenge.
\newblock In \emph{Machine Learning Challenges Workshop}, 2005.

\bibitem[Devlin et~al.(2019)Devlin, Chang, Lee, and Toutanova]{devlin2019bert}
Jacob Devlin, Ming-Wei Chang, Kenton Lee, and Kristina Toutanova.
\newblock {BERT}: Pre-training of deep bidirectional transformers for language
  understanding.
\newblock In \emph{NAACL-HLT}, 2019.

\bibitem[Dodge et~al.(2021)Dodge, Marasovi{\'c}, Ilharco, Groeneveld, Mitchell,
  and Gardner]{Dodge2021DocumentingLW}
Jesse Dodge, Ana Marasovi{\'c}, Gabriel Ilharco, Dirk Groeneveld, Margaret
  Mitchell, and Matt Gardner.
\newblock Documenting large webtext corpora: A case study on the colossal clean
  crawled corpus.
\newblock In \emph{EMNLP}, 2021.

\bibitem[Dolan \& Brockett(2005)Dolan and Brockett]{MRPC}
William~B Dolan and Chris Brockett.
\newblock Automatically constructing a corpus of sentential paraphrases.
\newblock In \emph{International Workshop on Paraphrasing (IWP)}, 2005.

\bibitem[Dong et~al.(2021)Dong, Cordonnier, and Loukas]{Dong2021AttentionIN}
Yihe Dong, Jean-Baptiste Cordonnier, and Andreas Loukas.
\newblock Attention is not all you need: Pure attention loses rank doubly
  exponentially with depth.
\newblock In \emph{ICML}, 2021.

\bibitem[Dosovitskiy et~al.(2021)Dosovitskiy, Beyer, Kolesnikov, Weissenborn,
  Zhai, Unterthiner, Dehghani, Minderer, Heigold, Gelly, Uszkoreit, and
  Houlsby]{Dosovitskiy2020AnII}
Alexey Dosovitskiy, Lucas Beyer, Alexander Kolesnikov, Dirk Weissenborn,
  Xiaohua Zhai, Thomas Unterthiner, Mostafa Dehghani, Matthias Minderer, Georg
  Heigold, Sylvain Gelly, Jakob Uszkoreit, and Neil Houlsby.
\newblock An image is worth 16x16 words: Transformers for image recognition at
  scale.
\newblock In \emph{ICLR}, 2021.

\bibitem[Ethayarajh(2019)]{Ethayarajh2019HowCA}
Kawin Ethayarajh.
\newblock How contextual are contextualized word representations? comparing the
  geometry of {BERT}, {ELMo}, and {GPT-2} embeddings.
\newblock In \emph{EMNLP}, 2019.

\bibitem[Gao et~al.(2019)Gao, He, Tan, Qin, Wang, and
  Liu]{Gao2019RepresentationDP}
Jun Gao, Di~He, Xu~Tan, Tao Qin, Liwei Wang, and Tie-Yan Liu.
\newblock Representation degeneration problem in training natural language
  generation models.
\newblock In \emph{ICLR}, 2019.

\bibitem[Gao et~al.(2021)Gao, Fisch, and Chen]{gao2021making}
Tianyu Gao, Adam Fisch, and Danqi Chen.
\newblock Making pre-trained language models better few-shot learners.
\newblock In \emph{ACL}, 2021.

\bibitem[Giampiccolo et~al.(2007)Giampiccolo, Magnini, Dagan, and Dolan]{RTE-3}
Danilo Giampiccolo, Bernardo Magnini, Ido Dagan, and Bill Dolan.
\newblock The third pascal recognizing textual entailment challenge.
\newblock In \emph{ACL-PASCAL workshop on textual entailment and paraphrasing},
  2007.

\bibitem[Gokaslan \& Cohen(2019)Gokaslan and Cohen]{Gokaslan2019OpenWeb}
Aaron Gokaslan and Vanya Cohen.
\newblock {OpenWebText} corpus.
\newblock \url{http://Skylion007.github.io/OpenWebTextCorpus}, 2019.

\bibitem[Goldberg(2019)]{Goldberg2019AssessingBS}
Yoav Goldberg.
\newblock Assessing bert's syntactic abilities.
\newblock \emph{ArXiv}, abs/1901.05287, 2019.

\bibitem[Gong et~al.(2019)Gong, He, Li, Qin, Wang, and
  Liu]{Gong2019EfficientTO}
Linyuan Gong, Di~He, Zhuohan Li, Tao Qin, Liwei Wang, and Tie-Yan Liu.
\newblock Efficient training of {BERT} by progressively stacking.
\newblock In \emph{ICML}, 2019.

\bibitem[Gong et~al.(2023)Gong, Xiong, Liu, Bajaj, Xie, Cheung, Gao, and
  Song]{Gong2023ModelGeneratedPS}
Linyuan Gong, Chenyan Xiong, Xiaodong Liu, Payal Bajaj, Yiqing Xie, Alvin
  Cheung, Jianfeng Gao, and Xia Song.
\newblock Model-generated pretraining signals improves zero-shot generalization
  of text-to-text transformers.
\newblock In \emph{ACL}, 2023.

\bibitem[Haim et~al.(2006)Haim, Dagan, Dolan, Ferro, Giampiccolo, Magnini, and
  Szpektor]{RTE-2}
R~Bar Haim, Ido Dagan, Bill Dolan, Lisa Ferro, Danilo Giampiccolo, Bernardo
  Magnini, and Idan Szpektor.
\newblock The second pascal recognising textual entailment challenge.
\newblock In \emph{PASCAL Challenges Workshop on Recognising Textual
  Entailment}, 2006.

\bibitem[He et~al.(2022)He, Chen, Xie, Li, Doll'ar, and
  Girshick]{He2022MaskedAA}
Kaiming He, Xinlei Chen, Saining Xie, Yanghao Li, Piotr Doll'ar, and Ross~B.
  Girshick.
\newblock Masked autoencoders are scalable vision learners.
\newblock In \emph{CVPR}, 2022.

\bibitem[He et~al.(2021)He, Liu, Gao, and Chen]{he2020deberta}
Pengcheng He, Xiaodong Liu, Jianfeng Gao, and Weizhu Chen.
\newblock {DeBERTa}: Decoding-enhanced {BERT} with disentangled attention.
\newblock In \emph{ICLR}, 2021.

\bibitem[He et~al.(2023)He, Gao, and Chen]{He2021DeBERTaV3ID}
Pengcheng He, Jianfeng Gao, and Weizhu Chen.
\newblock {DeBERTaV3}: Improving {DeBERTa} using {ELECTRA}-style pre-training
  with gradient-disentangled embedding sharing.
\newblock In \emph{ICLR}, 2023.

\bibitem[Hendrycks et~al.(2019)Hendrycks, Lee, and
  Mazeika]{Hendrycks2019UsingPC}
Dan Hendrycks, Kimin Lee, and Mantas Mazeika.
\newblock Using pre-training can improve model robustness and uncertainty.
\newblock In \emph{ICML}, 2019.

\bibitem[Hewitt \& Manning(2019)Hewitt and Manning]{Hewitt2019ASP}
John Hewitt and Christopher~D. Manning.
\newblock A structural probe for finding syntax in word representations.
\newblock In \emph{NAACL}, 2019.

\bibitem[Hou et~al.(2022)Hou, Liu, Cen, Dong, Yang, Wang, and
  Tang]{Hou2022GraphMAESM}
Zhenyu Hou, Xiao Liu, Yukuo Cen, Yuxiao Dong, Hongxia Yang, Chun-Wei Wang, and
  Jie Tang.
\newblock {GraphMAE}: Self-supervised masked graph autoencoders.
\newblock In \emph{KDD}, 2022.

\bibitem[Joshi et~al.(2019)Joshi, Chen, Liu, Weld, Zettlemoyer, and
  Levy]{Joshi2019SpanBERTIP}
Mandar Joshi, Danqi Chen, Yinhan Liu, Daniel~S. Weld, Luke Zettlemoyer, and
  Omer Levy.
\newblock {SpanBERT}: Improving pre-training by representing and predicting
  spans.
\newblock \emph{Transactions of the Association for Computational Linguistics},
  8:\penalty0 64--77, 2019.

\bibitem[Ke et~al.(2021)Ke, He, and Liu]{Ke2021RethinkingPE}
Guolin Ke, Di~He, and Tie-Yan Liu.
\newblock Rethinking positional encoding in language pre-training.
\newblock In \emph{ICLR}, 2021.

\bibitem[Kumar et~al.(2022)Kumar, Raghunathan, Jones, Ma, and
  Liang]{Kumar2022FineTuningCD}
Ananya Kumar, Aditi Raghunathan, Robbie Jones, Tengyu Ma, and Percy Liang.
\newblock Fine-tuning can distort pretrained features and underperform
  out-of-distribution.
\newblock In \emph{ICLR}, 2022.

\bibitem[Lan et~al.(2020)Lan, Chen, Goodman, Gimpel, Sharma, and
  Soricut]{lan2019albert}
Zhenzhong Lan, Mingda Chen, Sebastian Goodman, Kevin Gimpel, Piyush Sharma, and
  Radu Soricut.
\newblock {ALBERT}: A lite {BERT} for self-supervised learning of language
  representations.
\newblock In \emph{ICLR}, 2020.

\bibitem[Lewis et~al.(2020{\natexlab{a}})Lewis, Liu, Goyal, Ghazvininejad,
  Mohamed, Levy, Stoyanov, and Zettlemoyer]{Lewis2020BARTDS}
Mike Lewis, Yinhan Liu, Naman Goyal, Marjan Ghazvininejad, Abdelrahman Mohamed,
  Omer Levy, Veselin Stoyanov, and Luke Zettlemoyer.
\newblock {BART}: Denoising sequence-to-sequence pre-training for natural
  language generation, translation, and comprehension.
\newblock In \emph{ACL}, 2020{\natexlab{a}}.

\bibitem[Lewis et~al.(2020{\natexlab{b}})Lewis, Perez, Piktus, Petroni,
  Karpukhin, Goyal, Kuttler, Lewis, tau Yih, Rockt{\"a}schel, Riedel, and
  Kiela]{Lewis2020RetrievalAugmentedGF}
Patrick Lewis, Ethan Perez, Aleksandara Piktus, Fabio Petroni, Vladimir
  Karpukhin, Naman Goyal, Heinrich Kuttler, Mike Lewis, Wen tau Yih, Tim
  Rockt{\"a}schel, Sebastian Riedel, and Douwe Kiela.
\newblock Retrieval-augmented generation for knowledge-intensive {NLP} tasks.
\newblock In \emph{NeurIPS}, 2020{\natexlab{b}}.

\bibitem[Li et~al.(2020)Li, Zhou, He, Wang, Yang, and Li]{Li2020OnTS}
Bohan Li, Hao Zhou, Junxian He, Mingxuan Wang, Yiming Yang, and Lei Li.
\newblock On the sentence embeddings from pre-trained language models.
\newblock In \emph{EMNLP}, 2020.

\bibitem[Liang et~al.(2021)Liang, Wu, Morency, and
  Salakhutdinov]{Liang2021TowardsUA}
Paul~Pu Liang, Chiyu Wu, Louis-Philippe Morency, and Ruslan Salakhutdinov.
\newblock Towards understanding and mitigating social biases in language
  models.
\newblock In \emph{ICML}, 2021.

\bibitem[Liao et~al.(2022)Liao, Thulke, Hewavitharana, Ney, and
  Monz]{Liao2022MaskMA}
Baohao Liao, David Thulke, Sanjika Hewavitharana, Hermann Ney, and Christof
  Monz.
\newblock Mask more and mask later: Efficient pre-training of masked language
  models by disentangling the {[MASK]} token.
\newblock In \emph{EMNLP}, 2022.

\bibitem[Liu et~al.(2019)Liu, Ott, Goyal, Du, Joshi, Chen, Levy, Lewis,
  Zettlemoyer, and Stoyanov]{liu2019roberta}
Yinhan Liu, Myle Ott, Naman Goyal, Jingfei Du, Mandar Joshi, Danqi Chen, Omer
  Levy, Mike Lewis, Luke Zettlemoyer, and Veselin Stoyanov.
\newblock Ro{BERT}a: {A} robustly optimized {BERT} pretraining approach.
\newblock \emph{ArXiv}, abs/1907.11692, 2019.

\bibitem[Meng et~al.(2021)Meng, Xiong, Bajaj, Tiwary, Bennett, Han, and
  Song]{meng2021coco}
Yu~Meng, Chenyan Xiong, Payal Bajaj, Saurabh Tiwary, Paul Bennett, Jiawei Han,
  and Xia Song.
\newblock {COCO-LM}: Correcting and contrasting text sequences for language
  model pretraining.
\newblock In \emph{NeurIPS}, 2021.

\bibitem[Meng et~al.(2022)Meng, Xiong, Bajaj, Tiwary, Bennett, Han, and
  Song]{Meng2022PretrainingTE}
Yu~Meng, Chenyan Xiong, Payal Bajaj, Saurabh Tiwary, Paul Bennett, Jiawei Han,
  and Xia Song.
\newblock Pretraining text encoders with adversarial mixture of training signal
  generators.
\newblock In \emph{ICLR}, 2022.

\bibitem[Radford et~al.(2018)Radford, Narasimhan, Salimans, Sutskever,
  et~al.]{radford2018improving}
Alec Radford, Karthik Narasimhan, Tim Salimans, Ilya Sutskever, et~al.
\newblock Improving language understanding by generative pre-training.
\newblock \emph{OpenAI blog}, 2018.

\bibitem[Radford et~al.(2019)Radford, Wu, Child, Luan, Amodei, and
  Sutskever]{radford2019language}
Alec Radford, Jeffrey Wu, Rewon Child, David Luan, Dario Amodei, and Ilya
  Sutskever.
\newblock Language models are unsupervised multitask learners.
\newblock \emph{OpenAI blog}, 1\penalty0 (8):\penalty0 9, 2019.

\bibitem[Raffel et~al.(2019)Raffel, Shazeer, Roberts, Lee, Narang, Matena,
  Zhou, Li, and Liu]{raffel2019t5}
Colin Raffel, Noam Shazeer, Adam Roberts, Katherine Lee, Sharan Narang, Michael
  Matena, Yanqi Zhou, Wei Li, and Peter~J Liu.
\newblock Exploring the limits of transfer learning with a unified text-to-text
  transformer.
\newblock \emph{Journal of Machine Learning Research}, 2019.

\bibitem[Rajpurkar et~al.(2018)Rajpurkar, Jia, and Liang]{Rajpurkar2018KnowWY}
Pranav Rajpurkar, Robin Jia, and Percy Liang.
\newblock Know what you don’t know: Unanswerable questions for {SQuAD}.
\newblock In \emph{ACL}, 2018.

\bibitem[Sanh et~al.(2019)Sanh, Debut, Chaumond, and
  Wolf]{Sanh2019DistilBERTAD}
Victor Sanh, Lysandre Debut, Julien Chaumond, and Thomas Wolf.
\newblock {DistilBERT}, a distilled version of {BERT}: smaller, faster, cheaper
  and lighter.
\newblock \emph{ArXiv}, abs/1910.01108, 2019.

\bibitem[Schick \& Sch{\"u}tze(2021)Schick and
  Sch{\"u}tze]{Schick2021ExploitingCF}
Timo Schick and Hinrich Sch{\"u}tze.
\newblock Exploiting cloze-questions for few-shot text classification and
  natural language inference.
\newblock In \emph{EACL}, 2021.

\bibitem[Sennrich et~al.(2015)Sennrich, Haddow, and Birch]{sennrich2015neural}
Rico Sennrich, Barry Haddow, and Alexandra Birch.
\newblock Neural machine translation of rare words with subword units.
\newblock In \emph{ACL}, 2015.

\bibitem[Shankar et~al.(2017)Shankar, Nikhil, and Kornél]{QQP}
Iyer Shankar, Dandekar Nikhil, and Csernai Kornél.
\newblock First {Quora} dataset release: Question pairs, 2017.
\newblock URL
  \url{https://www.quora.com/q/quoradata/First-Quora-Dataset-Release-Question-Pairs}.

\bibitem[Socher et~al.(2013)Socher, Perelygin, Wu, Chuang, Manning, Ng, and
  Potts]{SST-2}
Richard Socher, Alex Perelygin, Jean Wu, Jason Chuang, Christopher~D Manning,
  Andrew~Y Ng, and Christopher Potts.
\newblock Recursive deep models for semantic compositionality over a sentiment
  treebank.
\newblock In \emph{EMNLP}, 2013.

\bibitem[Song et~al.(2023)Song, Cai, Lee, and Su]{Song2023RewardCI}
Ziang Song, Tianle Cai, Jason~D. Lee, and Weijie Su.
\newblock Reward collapse in aligning large language models.
\newblock \emph{ArXiv}, abs/2305.17608, 2023.

\bibitem[Tenney et~al.(2019{\natexlab{a}})Tenney, Das, and
  Pavlick]{Tenney2019BERTRT}
Ian Tenney, Dipanjan Das, and Ellie Pavlick.
\newblock {BERT} rediscovers the classical {NLP} pipeline.
\newblock In \emph{ACL}, 2019{\natexlab{a}}.

\bibitem[Tenney et~al.(2019{\natexlab{b}})Tenney, Xia, Chen, Wang, Poliak,
  McCoy, Kim, Durme, Bowman, Das, and Pavlick]{Tenney2019WhatDY}
Ian Tenney, Patrick Xia, Berlin Chen, Alex Wang, Adam Poliak, R.~Thomas McCoy,
  Najoung Kim, Benjamin~Van Durme, Samuel~R. Bowman, Dipanjan Das, and Ellie
  Pavlick.
\newblock What do you learn from context? probing for sentence structure in
  contextualized word representations.
\newblock In \emph{ICLR}, 2019{\natexlab{b}}.

\bibitem[Tong et~al.(2022)Tong, Song, Wang, and Wang]{Tong2022VideoMAEMA}
Zhan Tong, Yibing Song, Jue Wang, and Limin Wang.
\newblock {VideoMAE}: Masked autoencoders are data-efficient learners for
  self-supervised video pre-training.
\newblock In \emph{NeurIPS}, 2022.

\bibitem[Trinh \& Le(2018)Trinh and Le]{trinh2018simple}
Trieu~H Trinh and Quoc~V Le.
\newblock A simple method for commonsense reasoning.
\newblock \emph{ArXiv}, abs/1806.02847, 2018.

\bibitem[Wang et~al.(2018)Wang, Singh, Michael, Hill, Levy, and
  Bowman]{wang2018glue}
Alex Wang, Amanpreet Singh, Julian Michael, Felix Hill, Omer Levy, and Samuel~R
  Bowman.
\newblock {GLUE}: A multi-task benchmark and analysis platform for natural
  language understanding.
\newblock In \emph{EMNLP Workshop BlackboxNLP}, 2018.

\bibitem[Wang et~al.(2022)Wang, Chen, Wu, Chen, Dai, Liu, Jiang, Zhou, and
  Yuan]{Wang2022BEVTBP}
Rui Wang, Dongdong Chen, Zuxuan Wu, Yinpeng Chen, Xiyang Dai, Mengchen Liu,
  Yu-Gang Jiang, Luowei Zhou, and Lu~Yuan.
\newblock {BEVT}: {BERT} pretraining of video transformers.
\newblock In \emph{CVPR}, 2022.

\bibitem[Warstadt et~al.(2019)Warstadt, Singh, and Bowman]{COLA}
Alex Warstadt, Amanpreet Singh, and Samuel~R Bowman.
\newblock Neural network acceptability judgments.
\newblock In \emph{TACL}, 2019.

\bibitem[Wettig et~al.(2023)Wettig, Gao, Zhong, and Chen]{Wettig2022ShouldYM}
Alexander Wettig, Tianyu Gao, Zexuan Zhong, and Danqi Chen.
\newblock Should you mask 15\% in masked language modeling?
\newblock In \emph{EACL}, 2023.

\bibitem[Williams et~al.(2018)Williams, Nangia, and Bowman]{MNLI}
Adina Williams, Nikita Nangia, and Samuel Bowman.
\newblock A broad-coverage challenge corpus for sentence understanding through
  inference.
\newblock In \emph{NAACL-HLT}, 2018.

\bibitem[Xie et~al.(2022)Xie, Zhang, Cao, Lin, Bao, Yao, Dai, and
  Hu]{Xie2022SimMIMAS}
Zhenda Xie, Zheng Zhang, Yue Cao, Yutong Lin, Jianmin Bao, Zhuliang Yao,
  Qi~Dai, and Han Hu.
\newblock {SimMIM}: a simple framework for masked image modeling.
\newblock In \emph{CVPR}, 2022.

\bibitem[Yang et~al.(2018)Yang, Dai, Salakhutdinov, and
  Cohen]{Yang2018BreakingTS}
Zhilin Yang, Zihang Dai, Ruslan Salakhutdinov, and William~W. Cohen.
\newblock Breaking the softmax bottleneck: A high-rank {RNN} language model.
\newblock In \emph{ICLR}, 2018.

\bibitem[Yang et~al.(2019)Yang, Dai, Yang, Carbonell, Salakhutdinov, and
  Le]{yang2019xlnet}
Zhilin Yang, Zihang Dai, Yiming Yang, Jaime Carbonell, Russ~R Salakhutdinov,
  and Quoc~V Le.
\newblock {XLNet}: Generalized autoregressive pretraining for language
  understanding.
\newblock In \emph{NeurIPS}, 2019.

\bibitem[Zhong et~al.(2023)Zhong, Ding, Liu, Du, and Tao]{Zhong2023CanCU}
Qihuang Zhong, Liang Ding, Juhua Liu, Bo~Du, and Dacheng Tao.
\newblock Can {ChatGPT} understand too? a comparative study on {ChatGPT} and
  fine-tuned {BERT}.
\newblock \emph{ArXiv}, abs/2302.10198, 2023.

\bibitem[Zhu et~al.(2015)Zhu, Kiros, Zemel, Salakhutdinov, Urtasun, Torralba,
  and Fidler]{zhu2015aligning}
Yukun Zhu, Ryan Kiros, Rich Zemel, Ruslan Salakhutdinov, Raquel Urtasun,
  Antonio Torralba, and Sanja Fidler.
\newblock Aligning books and movies: Towards story-like visual explanations by
  watching movies and reading books.
\newblock In \emph{ICCV}, 2015.

\end{thebibliography}
\bibliographystyle{iclr2024_conference}

\newpage
\appendix

\section{Detailed Proofs}
\label{app:proof}

\deficient*

\begin{proof} 

We prove the statement by contradiction: We suppose that the row space of $\bs{H}_{\mathcal{R}}^l$ always contains the row space of $\bs{H}_{\mathcal{M}}^l$ in all layers $1 \le l \le L$, and we will show that under this assumption, $\bs{H}_{\mathcal{M}}^l$ will converge exponentially (with $l$) to a rank-$1$ matrix, which contradicts with Lemma~\ref{lem:rank}.
% while the feed-forward network can also change the rank of representations by learning nonlinear transformations, 
In the following, we assume \emph{single-head} self-attention is used, and the analysis can be easily generalized to the multi-head case.

The following proof extends \cite{Dong2021AttentionIN} by considering the representations of real tokens and mask tokens separately and following the residual norm analysis in \cite{Dong2021AttentionIN} to study the rank changes.

The self-attention module in the $l$th layer takes the previous layer representations $\bs{H}$ (the superscript $l-1$ is omitted for convenience) as input and derives the output representations $\bs{H}'$:
\begin{align*} 
\bs{H}' &= \text{Attn}\left(\bs{H}\bs{W}^Q, \bs{H}\bs{W}^K, \bs{H}\bs{W}^V\right) \bs{W}^O\\
&= \text{Softmax}\left( \frac{\bs{H}\bs{W}^Q \tp{\bs{W}^K}\tp{\bs{H}} }{\sqrt{d}}\right) \bs{H}\bs{W}^V \bs{W}^O \\
&= \bs{A} \bs{H}\bs{W}^{VO},
\end{align*}
where we denote the attention matrix computed from softmax as $\bs{A}$, and $\bs{W}^{VO} = \bs{W}^V \bs{W}^O$.

We study how the real token representations change (\ie, comparing $\bs{H}_{\mathcal{R}}'$ with $\bs{H}_{\mathcal{R}}$) through the self-attention module.
To facilitate easy analyses, we partition the input token representation matrix $\bs{H} \in \mathbb{R}^{(n+m) \times d}$ into blocks consisting of real token representations $\bs{H}_{\mathcal{R}} \in \mathbb{R}^{n \times d}$ and \mask token representations $\bs{H}_{\mathcal{M}} \in \mathbb{R}^{m \times d}$, and partition the attention matrix $\bs{A}_{\mathcal{R}}$ into blocks consisting of attention weights from real tokens to real tokens $\bs{A}_{\mathcal{R}:\mathcal{R}} \in \mathbb{R}^{n \times n}$ and from real tokens to \mask tokens $\bs{A}_{\mathcal{R}:\mathcal{M}} \in \mathbb{R}^{n \times m}$:
$$
\bs{H} = \begin{bmatrix}
\bs{H}_{\mathcal{R}} \\
\bs{H}_{\mathcal{M}}
\end{bmatrix}, \quad 
\bs{A}_{\mathcal{R}} = \begin{bmatrix}
\bs{A}_{\mathcal{R}:\mathcal{R}} & \bs{A}_{\mathcal{R}:\mathcal{M}}
\end{bmatrix}.
$$
We further denote
$$
\bs{S}_{\mathcal{R}:\mathcal{R}} = \exp \left[ \bs{H}_{\mathcal{R}} \bs{W}^{QK} \bs{H}_{\mathcal{R}}^\top \right],  \quad 
\bs{S}_{\mathcal{R}:\mathcal{M}} = \exp \left[ \bs{H}_{\mathcal{R}} \bs{W}^{QK} \bs{H}_{\mathcal{M}}^\top \right],  \quad 
\bs{Z} = \text{diag}(\bs{S}_{\mathcal{R}:\mathcal{R}}\bs{1} + \bs{S}_{\mathcal{R}:\mathcal{M}}\bs{1}),
$$
where $\exp [\cdot]$ denotes the element-wise exponential function, $\text{diag}(\cdot)$ constructs a diagnal matrix from a vector, $\bs{W}^{QK} = \bs{W}^Q \tp{\bs{W}^K}/\sqrt{d}$, and $\bs{1}$ is a vector of all ones.
Then
$$
\bs{A}_{\mathcal{R}:\mathcal{R}} = \bs{Z}^{-1} \bs{S}_{\mathcal{R}:\mathcal{R}}, \quad \bs{A}_{\mathcal{R}:\mathcal{M}} = \bs{Z}^{-1} \bs{S}_{\mathcal{R}:\mathcal{M}}.
$$
Based on the above notations, the output representations at real token positions $\bs{H}'_{\mathcal{R}}$ can be written as:
\begin{equation}
\label{eq:self_attn}
\bs{H}'_{\mathcal{R}} = \bs{A}_{\mathcal{R}} \bs{H} \bs{W}^{VO} = 
\begin{bmatrix}
\bs{A}_{\mathcal{R}:\mathcal{R}} & \bs{A}_{\mathcal{R}:\mathcal{M}}
\end{bmatrix}
\begin{bmatrix}
\bs{H}_{\mathcal{R}} \\
\bs{H}_{\mathcal{M}}
\end{bmatrix}
\bs{W}^{VO}
= \bs{Z}^{-1} \left( \bs{S}_{\mathcal{R}:\mathcal{R}} \bs{H}_{\mathcal{R}} + \bs{S}_{\mathcal{R}:\mathcal{M}} \bs{H}_{\mathcal{M}} \right) \bs{W}^{VO}.
\end{equation}

If the row space of $\bs{H}_{\mathcal{R}}$ contains the row space of $\bs{H}_{\mathcal{M}}$, each row of $\bs{H}_{\mathcal{M}}$ can be represented as a linear combination of the rows in $\bs{H}_{\mathcal{R}}$:
$$
\bs{H}_{\mathcal{M}} = \bs{U} \bs{H}_{\mathcal{R}},
$$
where $\bs{U} \in \mathbb{R}^{m \times n}$ is the linear combination weight matrix. 
We can rescale the vector norm of each row in $\bs{H}_{\mathcal{M}}$ so that $\bs{U}$ has a row sum of one (\ie, $\bs{U} \bs{1} = \bs{1}$).

To examine the rank of real token representations, we examine the change in matrix residual through Transformer layers, inspired by \cite{Dong2021AttentionIN}. 
Specifically, we define the following residual $\bs{R}$ which measures the difference between $\bs{H}_{\mathcal{R}}$ and a rank-$1$ matrix: 
$$
\bs{R} = \bs{H}_{\mathcal{R}} - \bs{1} \bs{h}^\top, \quad \bs{h} = \argmin_{\bs{x}} \left\| \bs{H}_{\mathcal{R}} - \bs{1} \bs{x}^\top \right\|.
$$
We aim to show that the norm of $\bs{R}$ converges exponentially (with layer depth) to zero, meaning that $\bs{H}_{\mathcal{R}}$ converges (with layer depth) to a rank-$1$ matrix.

By plugging $\bs{H}_{\mathcal{R}} = \bs{R} + \bs{1} \bs{h}^\top$ and $\bs{H}_{\mathcal{M}} = \bs{U} \bs{H}_{\mathcal{R}} = \bs{U} \bs{R} + \bs{U}\bs{1} \bs{h}^\top = \bs{U} \bs{R} + \bs{1} \bs{h}^\top$ into \Eqref{eq:self_attn}, we obtain
\begin{align}
\label{eq:H_R}
\bs{H}'_{\mathcal{R}} &= \bs{Z}^{-1} \left( \bs{S}_{\mathcal{R}:\mathcal{R}} \left(\bs{R} + \bs{1} \bs{h}^\top \right) + \bs{S}_{\mathcal{R}:\mathcal{M}} \left(\bs{U}\bs{R} + \bs{1} \bs{h}^\top \right) \right) \bs{W}^{VO} \nonumber \\
&= \left( \bs{Z}^{-1} \left( \bs{S}_{\mathcal{R}:\mathcal{R}} + \bs{S}_{\mathcal{R}:\mathcal{M}} \bs{U} \right) \bs{R} + \underbrace{\bs{Z}^{-1} \left( \bs{S}_{\mathcal{R}:\mathcal{R}}\bs{1} + \bs{S}_{\mathcal{R}:\mathcal{M}}\bs{1} \right)}_{=\bs{1}} \bs{h}^\top \right) \bs{W}^{VO} \nonumber \\
&= \bs{Z}^{-1} \left( \bs{S}_{\mathcal{R}:\mathcal{R}} + \bs{S}_{\mathcal{R}:\mathcal{M}} \bs{U} \right) \bs{R} \bs{W}^{VO} + \bs{1} \bs{h}^\top \bs{W}^{VO}.
% &= \bs{Z}^{-1} \left( \exp \left[ \bs{H}_{\mathcal{R}} \bs{W}^{QK} \bs{H}_{\mathcal{R}}^\top \right] + \exp \left[ \bs{H}_{\mathcal{R}} \bs{W}^{QK} \bs{H}_{\mathcal{M}}^\top \right] \bs{U} \right) \bs{R} \bs{W}^{VO} + \bs{1} \bs{h}^\top \bs{W}^{VO}
\end{align}
Next we write out $\bs{S}_{\mathcal{R}:\mathcal{R}}$ and $\bs{S}_{\mathcal{R}:\mathcal{M}}$:
{\small
\begin{align*}
\bs{S}_{\mathcal{R}:\mathcal{R}} &= \exp \left[ \bs{H}_{\mathcal{R}} \bs{W}^{QK} \bs{H}_{\mathcal{R}}^\top \right] \\
&= \exp \left[ (\bs{R} + \bs{1} \bs{h}^\top) \bs{W}^{QK} (\bs{R} + \bs{1} \bs{h}^\top)^\top \right] \\
&= \exp \left[ \bs{R} \bs{W}^{QK} \bs{R}^\top + \bs{1}\bs{h}^\top\bs{W}^{QK}\bs{R}^\top + \left(\bs{R} \bs{W}^{QK} \bs{h} + \bs{1} \bs{h}^\top \bs{W}^{QK} \bs{h} \right) \bs{1}^\top \right] \\
&= \exp \left[ \underbrace{\bs{R} \bs{W}^{QK} \bs{R}^\top}_{=\bs{F}} \right] \odot \exp \left[ \bs{1}\underbrace{\bs{h}^\top\bs{W}^{QK}\bs{R}^\top}_{=\bs{g}^\top} \right] \odot \exp \left[ \underbrace{\left(\bs{R} \bs{W}^{QK} \bs{h} + \bs{1} \bs{h}^\top \bs{W}^{QK} \bs{h} \right)}_{=\bs{c}} \bs{1}^\top \right],
\end{align*}}and
{\small
\begin{align*}
\bs{S}_{\mathcal{R}:\mathcal{M}} &= \exp \left[ \bs{H}_{\mathcal{R}} \bs{W}^{QK} \bs{H}_{\mathcal{M}}^\top \right] \\
&= \exp \left[ (\bs{R} + \bs{1} \bs{h}^\top) \bs{W}^{QK} (\bs{U} \bs{R} + \bs{1} \bs{h}^\top)^\top \right] \\
&= \exp \left[ \bs{R} \bs{W}^{QK} \bs{R}^\top \bs{U}^\top + \bs{1}\bs{h}^\top\bs{W}^{QK}\bs{R}^\top \bs{U}^\top + \left(\bs{R} \bs{W}^{QK} \bs{h} + \bs{1} \bs{h}^\top \bs{W}^{QK} \bs{h} \right) \bs{1}^\top \right] \\
&= \exp \left[ \underbrace{\bs{R} \bs{W}^{QK} \bs{R}^\top \bs{U}^\top}_{=\bs{F}'} \right] \odot \exp \left[ \bs{1} \underbrace{\bs{h}^\top\bs{W}^{QK}\bs{R}^\top \bs{U}^\top}_{=\bs{g}'^\top} \right] \odot \exp \left[ \underbrace{\left(\bs{R} \bs{W}^{QK} \bs{h} + \bs{1} \bs{h}^\top \bs{W}^{QK} \bs{h} \right)}_{=\bs{c}} \bs{1}^\top \right],
\end{align*}
}where $\odot$ denotes the element-wise product.
Let $\bs{F} = \bs{R} \bs{W}^{QK} \bs{R}^\top$, $\bs{F}' = \bs{R} \bs{W}^{QK} \bs{R}^\top \bs{U}^\top$, $\bs{g}^\top = \bs{h}^\top\bs{W}^{QK}\bs{R}^\top$, $\bs{g}'^\top = \bs{h}^\top\bs{W}^{QK}\bs{R}^\top \bs{U}^\top$,
% $\bs{G} = \bs{1}\bs{h}^\top\bs{W}^{QK}\bs{R}^\top$, $\bs{G}' = \bs{1}\bs{h}^\top\bs{W}^{QK}\bs{R}^\top \bs{U}^\top$,
and $\bs{c} = \bs{R} \bs{W}^{QK} \bs{h} + \bs{1} \bs{h}^\top \bs{W}^{QK} \bs{h}$, we can further write out $\bs{Z}$:
\begin{align*}
\bs{Z} &= \text{diag}\left( \bs{S}_{\mathcal{R}:\mathcal{R}}\bs{1} + \bs{S}_{\mathcal{R}:\mathcal{M}}\bs{1} \right) \\
&= \text{diag}\left( \left(  \left( \exp \left[ \bs{F} \right] \odot \exp \left[ \bs{1} \bs{g}^\top \right] \right)\bs{1} + \left(\exp \left[ \bs{F}' \right] \odot \exp \left[ \bs{1} \bs{g}'^\top \right] \right)\bs{1}\right) \odot \exp \left[ \bs{c} \right]\right).
\end{align*}
Let $
\widetilde{\bs{F}} = \begin{bmatrix}
\bs{F} & \bs{F}'
\end{bmatrix}
$ be the augmented matrix by combining the columns of $\bs{F}$ and $\bs{F}'$, and let $\overline{\bs{f}}$ and $\underline{\bs{f}}$ denote the maximum and minimum element across each row of $\widetilde{\bs{F}}$, respectively:
$$
\overline{f}_i = \max_j \widetilde{F}_{ij}, \quad \underline{f}_i = \min_j \widetilde{F}_{ij}.
$$
Then we can derive a lower bound of each element in $\bs{Z}^{-1} \bs{S}_{\mathcal{R}:\mathcal{R}}$:
\begin{align*}
\left[ \bs{Z}^{-1} \bs{S}_{\mathcal{R}:\mathcal{R}} \right]_{ij} &= \frac{\exp(F_{ij}) \exp(g_{j}) \exp(c_i)}{\left(\sum_{j'} \exp(F_{ij'}) \exp(g_{j'}) + \sum_{j'} \exp(F'_{ij'}) \exp(g'_{j'})\right) \exp(c_i)}\\
&\ge \frac{\exp(F_{ij}) \exp(g_{j})}{\exp \left(\overline{f}_i\right) \left( \sum_{j'} \exp(g_{j'}) + \sum_{j'} \exp(g'_{j'}) \right)}\\
&= \exp \left(F_{ij}-\overline{f}_i \right) \frac{\exp(g_{j})}{\sum_{j'} \exp(g_{j'}) + \sum_{j'} \exp(g'_{j'})}.
\end{align*}
Similarly, we can derive an upper bound:
$$
\left[ \bs{Z}^{-1} \bs{S}_{\mathcal{R}:\mathcal{R}} \right]_{ij} \le \exp \left(F_{ij}-\underline{f}_i \right) \frac{\exp(g_{j})}{\sum_{j'} \exp(g_{j'}) + \sum_{j'} \exp(g'_{j'})}.
$$
Using the the Taylor expansion of $\exp$, we have 
$$
\exp \left(F_{ij}-\overline{f}_i \right) \ge 1 + F_{ij}-\overline{f}_i \ge 1 + \underline{f}_i - \overline{f}_i, \quad \exp \left(F_{ij} - \underline{f}_i \right) \le 1 + 2 \left( F_{ij} - \underline{f}_i\right) \le 1 + 2 \left( \overline{f}_i -\underline{f}_i\right).
$$
Therefore,
{\small
$$
(1 + \underline{f}_i - \overline{f}_i) \frac{\exp(g_{j})}{\sum_{j'} \exp(g_{j'}) + \sum_{j'} \exp(g'_{j'})} \le \left[ \bs{Z}^{-1} \bs{S}_{\mathcal{R}:\mathcal{R}} \right]_{ij} \le (1 + 2\overline{f}_i - 2\underline{f}_i) \frac{\exp(g_{j})}{\sum_{j'} \exp(g_{j'}) + \sum_{j'} \exp(g'_{j'})}.
$$
}

Denote $\bs{D} = \text{diag}\left(\overline{\bs{f}} - \underline{\bs{f}}\right)$ and $g_{+} = \exp \left[ \bs{g}^\top \right]\bs{1} + \exp \left[ \bs{g}'^\top \right] \bs{1}$, then the above bound can be expressed in matrix form as follows (the inequality between matrices holds element-wise):
\begin{equation}
\label{eq:S_RR}
\frac{1}{g_{+}}(\bs{I} - \bs{D}) \bs{1} \exp \left[\bs{g}^\top\right] \le \bs{Z}^{-1} \bs{S}_{\mathcal{R}:\mathcal{R}} \le \frac{1}{g_{+}} (\bs{I} + 2\bs{D}) \bs{1} \exp \left[\bs{g}^\top\right].
\end{equation}
An analogous derivation gives the bound of $\bs{Z}^{-1} \bs{S}_{\mathcal{R}:\mathcal{M}}$:
\begin{equation}
\label{eq:S_RM}
\frac{1}{g_{+}}(\bs{I} - \bs{D}) \bs{1} \exp \left[\bs{g}'^\top\right] \le  \bs{Z}^{-1} \bs{S}_{\mathcal{R}:\mathcal{M}} \le \frac{1}{g_{+}} (\bs{I} + 2\bs{D}) \bs{1} \exp \left[\bs{g}'^\top\right].
\end{equation}
Since the upper and lower bounds are in very similar forms, we will only focus on the upper bound in the derivations below.

% Let $\bs{g}^\top = \bs{h}^\top\bs{W}^{QK}\bs{R}^\top$ and $\bs{g}'^\top = \bs{h}^\top\bs{W}^{QK}\bs{R}^\top \bs{U}^\top$, then $\exp \left[ \bs{G} \right] = \bs{1} \exp \left[\bs{g}^\top\right] $ and $\exp \left[ \bs{G}' \right] = \bs{1} \exp \left[\bs{g}'^\top\right] $. 
Combining \Eqref{eq:S_RR} with \Eqref{eq:S_RM}, we have 
\begin{align}
\label{eq:ineq}
\bs{Z}^{-1} \left( \bs{S}_{\mathcal{R}:\mathcal{R}} + \bs{S}_{\mathcal{R}:\mathcal{M}} \bs{U} \right) &\le \bs{1} \left(\underbrace{ \frac{\exp \left[\bs{g}^\top\right] + \exp \left[\bs{g}'^\top\right]\bs{U}}{g_{+}}}_{=\bs{r}^\top}  \right) + 2\bs{D} \bs{1} \left(\underbrace{ \frac{\exp \left[\bs{g}^\top\right] + \exp \left[\bs{g}'^\top\right] \bs{U}}{g_{+}}}_{=\bs{r}^\top} \right) \nonumber \\
&= \bs{1} \bs{r}^\top + 2\bs{D} \bs{1} \bs{r}^\top
\end{align}
Plugging \Eqref{eq:ineq} into \Eqref{eq:H_R}, we have 
$$
\bs{H}'_{\mathcal{R}} \le \left(\bs{1} \bs{r}^\top + 2\bs{D} \bs{1} \bs{r}^\top \right) \bs{R} \bs{W}^{VO} + \bs{1} \bs{h}^\top \bs{W}^{VO} = \bs{1} \left( \underbrace{\bs{r}^\top\bs{R} \bs{W}^{VO} + \bs{h}^\top \bs{W}^{VO}}_{=\bs{h}'^\top} \right) + 2\bs{D} \bs{1} \bs{r}^\top \bs{R} \bs{W}^{VO}.
$$
Therefore,
$$
\bs{H}'_{\mathcal{R}} - \bs{1}\bs{h}'^\top \le 2\bs{D} \bs{1} \bs{r}^\top \bs{R} \bs{W}^{VO}.
$$
With a similar derivation, we have the following lower bound:
$$
\bs{H}'_{\mathcal{R}} - \bs{1}\bs{h}'^\top \ge -\bs{D} \bs{1} \bs{r}^\top \bs{R} \bs{W}^{VO}.
$$
Overall, we can bound the element-wise absolute values of $\bs{R}' = \bs{H}'_{\mathcal{R}} - \bs{1}\bs{h}'^\top$, which measure the distance between $\bs{H}'_{\mathcal{R}}$ and a rank-$1$ matrix:
$$
\left| R'_{ij} \right| = \left|\left[ \bs{H}'_{\mathcal{R}} - \bs{1}\bs{h}'^\top \right]_{ij} \right| \le \left| \left[2\bs{D} \bs{1} \bs{r}^\top \bs{R} \bs{W}^{VO}\right]_{ij} \right|.
$$
This allows us to further bound the norm of $\bs{R}'$. For $\ell_1$ norm, we have
\begin{align*}
\left\| \bs{R}' \right\|_{1} &\le \left\| 2\bs{D} \bs{1} \bs{r}^\top \bs{R} \bs{W}^{VO} \right\|_{1} \\
&\le 2 \left\| \bs{D} \bs{1} \right\|_{\infty} \left\| \bs{r}^\top \bs{R} \bs{W}^{VO} \right\|_1 & \text{Based on H\"older's inequality} \\
&\le 2 \left\| \bs{D} \bs{1} \right\|_{\infty} \left\| \bs{r}^\top \right\|_{1} \left\|  \bs{R} \right\|_1 \left\| \bs{W}^{VO} \right\|_1, & \text{Submultiplicativity of matrix norms} \\
\end{align*}
where 
\begin{align*}
\left\| \bs{D} \bs{1} \right\|_{\infty} &= \max_{i} \left| \overline{f}_i - \underline{f}_i \right| \\
&\le 2 \left\| \widetilde{\bs{F}} \right\|_{1} \\ 
&\le 2 \max \left\{ \left\| \bs{R} \bs{W}^{QK} \bs{R}^\top \right\|_{1},  \left\| \bs{R} \bs{W}^{QK} \bs{R}^\top \bs{U}^\top \right\|_{1} \right\} \\
&\le 2 \left\|  \bs{R} \right\|_1 \left\| \bs{W}^{QK} \right\|_1 \left\|  \bs{R} \right\|_{\infty} \max \left\{ 1, \left\|  \bs{U} \right\|_{\infty} \right\} \\
&\le 2 \left\|  \bs{R} \right\|_1 \left\| \bs{W}^{QK} \right\|_1 \left\|  \bs{R} \right\|_{\infty} \left\|  \bs{U} \right\|_{\infty}, & \text{$\left\|  \bs{U} \right\|_{\infty} \ge 1$ since $\bs{U} \bs{1} = \bs{1}$}
\end{align*}
and 
\begin{align*}
\left\| \bs{r}^\top \right\|_{1} &\le \left\| \bs{r}^\top \right\|_{\infty} \\
&= \left\| \frac{\exp \left[\bs{g}^\top\right] + \exp \left[\bs{g}'^\top\right] \bs{U}}{g_{+}} \right\|_{\infty} \\
&\le \left\| \frac{\exp \left[\bs{g}^\top\right]}{g_{+}} \right\|_{\infty} + \left\| \frac{\exp \left[\bs{g}'^\top\right] \bs{U}}{g_{+}}\right\|_{\infty}\\
&\le 1 + \left\|  \bs{U} \right\|_{\infty}.
\end{align*}
Therefore, we can bound the $\ell_1$ norm of $\left\| \bs{R}' \right\|_{1}$ as follows:
\begin{equation}
\label{eq:l1_bound}
\left\| \bs{R}' \right\|_{1} \le 4  \left\| \bs{W}^{QK} \right\|_1 \left\| \bs{W}^{VO} \right\|_1 \left\|  \bs{U} \right\|_{\infty} \left( 1 + \left\|  \bs{U} \right\|_{\infty} \right) \left\|  \bs{R} \right\|_1^2 \left\|  \bs{R} \right\|_{\infty}.
\end{equation}
Similarly, we can obtain the bound for the $\ell_{\infty}$ norm of $\left\| \bs{R}' \right\|_{1}$:
\begin{equation}
\label{eq:linf_bound}
\left\| \bs{R}' \right\|_{\infty} \le 4  \left\| \bs{W}^{QK} \right\|_1 \left\| \bs{W}^{VO} \right\|_{\infty} \left\|  \bs{U} \right\|_{\infty} \left( 1 + \left\|  \bs{U} \right\|_{\infty} \right) \left\|  \bs{R} \right\|_1 \left\|  \bs{R} \right\|_{\infty}^2.
\end{equation}
Denote the geometric mean of $\left\| \bs{R} \right\|_{1}$ and $\left\| \bs{R} \right\|_{\infty}$ as $\left\| \bs{R} \right\|_{1,\infty} = \sqrt{\left\| \bs{R} \right\|_{1}\left\| \bs{R} \right\|_{\infty}}$, then from \Eqref{eq:l1_bound} and \Eqref{eq:linf_bound}, we have
\begin{align*}
\left\| \bs{R}' \right\|_{1,\infty} &\le 4 \underbrace{\left\| \bs{W}^{QK} \right\|_1 \left\| \bs{W}^{VO} \right\|_{1,\infty} \left\|  \bs{U} \right\|_{\infty} \left( 1 + \left\|  \bs{U} \right\|_{\infty} \right)}_{=\epsilon} \left\| \bs{R} \right\|_{1,\infty}^3 \\
&= 4 \epsilon \left\| \bs{R} \right\|_{1,\infty}^3.
\end{align*}
The above inequality reflects how the residual changes within one self-attention layer. Applying it recursively throughout all layers in an $L$-layer encoder, we have:
$$
\left\| \bs{R}^L \right\|_{1,\infty} \le \left(4 \overline{\epsilon} \right)^{\frac{3^L-1}{2}} \left\| \bs{R}^0 \right\|_{1,\infty}^{3^L}, \quad \overline{\epsilon} = \max_{l} \epsilon^{l},
$$
where $\bs{R}^L$ and $\bs{R}^0$ denote the residuals corresponding to the encoder's output real token representations $\bs{H}_{\mathcal{R}}^L$ and input real token representations $\bs{H}_{\mathcal{R}}^0$, respectively.

This demonstrates that the residual norms of real token representations converge exponentially (with layer depth) to zero. 
Hence, the real token representation matrix $\bs{H}_{\mathcal{R}}^l$ converges exponentially (with layer depth) to a rank-$1$ matrix. 
Since the row space of \mask token representations $\bs{H}_{\mathcal{M}}^l$ is contained by the row space of $\bs{H}_{\mathcal{R}}^l$, we have $\rank(\bs{H}^l_{\mathcal{M}}) \le \rank(\bs{H}^l_{\mathcal{R}})$, and $\bs{H}^l_{\mathcal{M}}$ will also converge exponentially (with layer depth) to a rank-$1$ matrix, which contradicts with Lemma~\ref{lem:rank}.
Finally, we conclude that the row space of $\bs{H}_{\mathcal{R}}^l$ must not contain the row space of $\bs{H}_{\mathcal{M}}^l$, which necessarily implies that $\bs{H}_{\mathcal{R}}^l$ is rank-deficient.
\end{proof}

\section{Details about GLUE Tasks}
\label{app:glue}
More details of all the GLUE tasks can be found as follows.

\textbf{MNLI:} The Multi-genre Natural Language Inference~\citep{MNLI} task includes $393$K training examples from crowdsourcing. 
The goal is to predict if a premise sentence entails, contradicts, or is neutral with respect to a given hypothesis sentence. 

\textbf{QQP:} Question Pairs~\citep{QQP} includes $364$K training examples from the Quora question-answering website. 
The task is to determine if two given questions are semantically equivalent.

\textbf{QNLI:} Question Natural Language Inference includes $108$K training examples derived from the Stanford Question Answering Dataset (SQuAD)~\citep{Rajpurkar2018KnowWY}. 
The task is to predict if a sentence contains the answer to a given question.

\textbf{SST-2:} Stanford Sentiment Treebank~\citep{SST-2} includes $67$K training examples on movie reviews with human annotations. 
The task is to determine if a given sentence has positive or negative sentiment. 

\textbf{CoLA:} Corpus of Linguistic Acceptability~\citep{COLA} includes $8.5$K training examples from books and journal articles on linguistic theory. 
The task is to determine if a given sentence is linguistically acceptable. 

\textbf{RTE:} Recognizing Textual Entailment~\citep{RTE-5,RTE-1,RTE-2,RTE-3} includes $2.5$K training examples from textual entailment challenges. 
The task is to predict if a premise sentence entails a given hypothesis sentence.

\textbf{MRPC:} Microsoft Research Paraphrase Corpus~\citep{MRPC} includes $3.7$K training examples collected from news sources. 
The task is to predict if two given sentences are semantically equivalent.

\textbf{STS-B:} Semantic Textual Similarity~\citep{STS-B} includes $5.8$K training examples collected from multiple sources on sentence pair semantic similarity annotated by humans.
The task is to predict the semantic similarity of two sentences (based on a $1$ to $5$ scoring scale).

\section{Implementation Details}
\label{app:implementation} 

% is built on the fairseq~\citep{ott2019fairseq} open-source library.
\textbf{Details of Pretraining Settings.}
The \textit{base} setting follows BERT$_\text{base}$~\citep{devlin2019bert} pretraining which uses Wikipedia and BookCorpus~\citep{zhu2015aligning} ($16$GB of texts) as the pretraining corpora. 
The encoder architecture is a $12$-layer Transformer, and the model dimension is $768$.
We train both absolute and relative position embeddings~\citep{raffel2019t5} in the encoder.
The decoder is a $4$-layer Transformer with the same model dimensions as the encoder.
Since the decoder is not used in downstream tasks, \method's encoder can be fairly compared with previous $12$-layer base-sized models.
The model is trained for $125$K steps with $2,048$ sequences per batch, which amounts to $256$M samples in total. 
The maximum input sequence length is $512$ tokens.
The vocabulary is constructed with BPE~\citep{sennrich2015neural} and consists of $32,768$ \emph{uncased} subword units.

The \textit{base++} setting follows RoBERTa~\citep{liu2019roberta} pretraining which extends the \textit{base} setting by incorporating larger pretraining corpora and training the same model architecture for longer.
Specifically, the following corpora are used along with Wikipedia and BookCorpus: OpenWebText~\citep{Gokaslan2019OpenWeb}, CC-News~\citep{liu2019roberta}, and STORIES~\citep{trinh2018simple}.
This expands the pretraining corpora to contain $160$GB texts.
The model is trained for $2$M steps with $2,048$ sequences per batch, which amounts to $4$B samples in total.
The \textit{base++} setting also expands the vocabulary size to $64,000$~\citep{unilmv2} by using \emph{cased} subword units.

The \textit{large++} setting extends the \textit{base++} setting by scaling up the encoder architecture to $24$ layers and $1,024$ model dimensions.
The decoder is still a $4$-layer Transformer with the same model dimensions as the encoder.
Due to the high cost of training large models, we train for $1$M steps (half of the \textit{base++} setting) with $2,048$ sequences per batch, which amounts to $2$B samples in total.
Note that this is also half of the pretraining data used in RoBERTa~\citep{liu2019roberta} and BART~\citep{Lewis2020BARTDS}.

\textbf{Computation Environment.}
The experiments in this paper are conducted on $64$ A100 GPUs.

\textbf{Masking.} 
For all pretraining settings, we apply $15\%$ random masks to input sequeces.
We do not use the trick in conventional MLM~\citep{devlin2019bert,liu2019roberta} that replaces $10\%$ of \mask tokens with the original ones and another $10\%$ with random tokens.
We also experiment with higher masking rates (\eg, $40\%$) which are shown to be beneficial in \cite{Wettig2022ShouldYM} for training large models, but they do not yield better results than the default $15\%$ masking rate in our experiments.
This is probably because \cite{Wettig2022ShouldYM} use an efficient pretraining recipe that is different from the standard pretraining setup, with a larger learning rate, a larger batch size, a shorter sequence length, and fewer training steps.

\textbf{Position Embedding.} 
We learn both absolute and relative position embeddings~\citep{raffel2019t5} in the encoder, and only learn absolute position embeddings in the decoder.

\textbf{Dropout.} 
During the pretraining of \method, dropout is applied to the encoder but not the decoder, which we find to slightly improve stability.

\section{Hyperparameter Settings}
\label{app:hyper}

\begin{table*}[t]
\small
\centering
\caption{Hyperparameters used in pretraining.}
\begin{tabular}{l*{3}{c}}
\toprule
Hyperparameter & \textit{base} & \textit{base++} & \textit{large++} \\
\midrule
Max Steps & 125K & 2M & 1M \\
Peak Learning Rate & 5e-4 & 2e-4 & 1e-4 \\
Batch Size & 2048 & 2048 & 2048 \\
Warm-Up Steps & 10K & 10K & 10K \\
Sequence Length & 512 & 512 & 512 \\
Relative Position Encoding Buckets & 32 & 64 & 128 \\
Relative Position Encoding Max Distance & 128 & 128 & 256 \\
Adam $\epsilon$ & 1e-6 & 1e-6 & 1e-6 \\
Adam ($\beta_1$, $\beta_2$) & (0.9, 0.98) & (0.9, 0.98) & (0.9, 0.98) \\
Clip Norm & 2.0 & 2.0 & 1.0 \\
Dropout & 0.1 & 0.1 & 0.1 \\
Weight Decay & 0.01 & 0.01 & 0.01 \\
\bottomrule
\end{tabular}
\label{tab:hp_pretrain}
\end{table*}

\begin{table*}[t]
\small
\centering
\caption{Hyperparameter ranges searched for fine-tuning on GLUE. GLUE small tasks include {CoLA}, {RTE}, {MRPC} and {STS-B}. GLUE large tasks include {MNLI}, {QQP}, {QNLI} and {SST-2}.}
\begin{tabular}{l*{2}{c}}
\toprule
Hyperparameter & GLUE Small Tasks Search Space & GLUE Large Tasks Search Space  \\
\midrule
Max Epochs & \{2, 3, 5, 10\} & \{2, 3, 5\} \\
\multirow{2}{*}{Peak Learning Rate} & \textit{base}/\textit{base++}: \{2e-5, 3e-5, 4e-5, 5e-5\} & \textit{base}/\textit{base++}: \{1e-5, 2e-5, 3e-5, 4e-5\} \\
& \textit{large++}: \{7e-6, 1e-5, 2e-5, 3e-5\} & \textit{large++}: \{5e-6, 7e-6, 1e-5, 2e-5\} \\
Batch Size & \{16, 32\} & 32 \\
Warm-Up Proportion & \{6\%, 10\%\} & 6\% \\
Sequence Length & 512 & 512 \\
Adam $\epsilon$ & 1e-6 & 1e-6\\
Adam ($\beta_1$, $\beta_2$) & (0.9, 0.98) & (0.9, 0.98)\\
Clip Norm & - & - \\
Dropout & 0.1 & 0.1 \\
Weight Decay & 0.01 & 0.01 \\
\bottomrule
\end{tabular}
\label{tab:hp_finetune_glue}
\end{table*}

\begin{table*}[t]
\small
\centering
\caption{Hyperparameter ranges searched for fine-tuning on SQuAD 2.0.}
\begin{tabular}{l*{1}{c}}
\toprule
Hyperparameter & SQuAD 2.0 Search Space \\
\midrule
Max Epochs & \{2, 3\}\\
\multirow{2}{*}{Peak Learning Rate} & \textit{base}/\textit{base++}: \{2e-5, 3e-5, 4e-5, 5e-5\} \\
& \textit{large++}: \{7e-6, 1e-5, 2e-5, 3e-5\}\\
Batch Size & \{16, 32\} \\
Warm-Up Proportion  & \{6\%, 10\%\} \\
Sequence Length & 512 \\
Adam $\epsilon$ & 1e-6 \\
Adam ($\beta_1$, $\beta_2$) & (0.9, 0.98) \\
Clip Norm & - \\
Dropout & 0.1 \\
Weight Decay & 0.01 \\
\bottomrule
\end{tabular}
\label{tab:hp_finetune_squad}
\end{table*}

We report the detailed hyperparameters used for pretraining in Table~\ref{tab:hp_pretrain}.
The hyperparameter search ranges of fine-tuning are shown in Tables~\ref{tab:hp_finetune_glue} and \ref{tab:hp_finetune_squad} for GLUE and SQuAD 2.0, respectively. 

For fair comparisons, the same set of hyperparameters (in both pretraining and fine-tuning) is used for \method, RoBERTa (Ours) and ablations. 
We follow previous pretraining studies~\citep{liu2019roberta} to report the medians of downstream task fine-tuning results under the same set of five different random seeds.

\section{More Evaluation Results}
\label{app:more_eval} 

\begin{wrapfigure}{r}{0.276\textwidth}
\centering
\includegraphics[width=\linewidth]{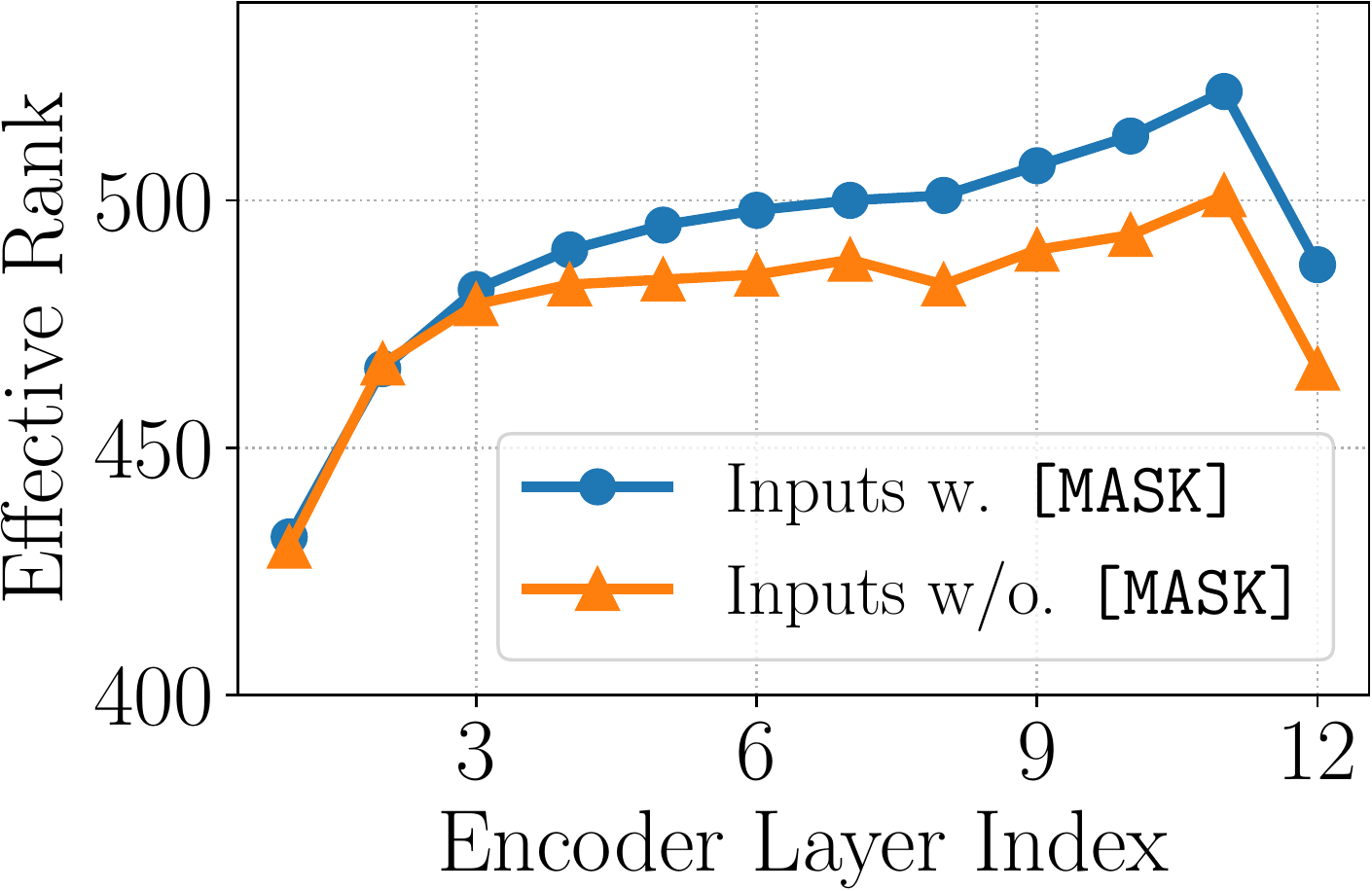}
\caption{With the original BERT masking strategy, the effective rank across layers for inputs without \mask and with \mask.\label{fig:eff_dim_bert_mask}}
\end{wrapfigure}
\textbf{BERT Masking Strategy.} In addition to our default masking strategy which directly applies $15\%$ random masks to input sequences, we also validate our findings under the original BERT masking strategy that replaces $10\%$ of \mask tokens with the original ones and another $10\%$ with random tokens. 
Figure~\ref{fig:eff_dim_bert_mask} demonstrates that the gap in effective representation rank between inputs with and without \mask under this setting is also notable, similar to the findings in Figure~\ref{fig:pretrain_finetune_dim}.
This confirms that randomly replacing a small percentage of \mask tokens with real tokens does not effectively address the representation deficiency issue, as the ratio of \mask tokens in pretraining is still high.

\textbf{Large Model Results.}
\begin{table*}[t]
\centering
\small 
\caption{
Standard single-task, single-model fine-tuning results (medians over five random seeds) evaluated on GLUE and SQuAD 2.0 development sets for large models.
$^\dagger$: \method is pretrained on half of RoBERTa/BART's data.
}
% \vspace{-0.5em}
\resizebox{\textwidth}{!}{
\begin{tabular}{l*{9}{l}ll}
\toprule
\multirow{2}{*}{\textbf{Model}} & \multicolumn{9}{c}{\textbf{GLUE (Single-Task)}} & \multicolumn{2}{c}{\textbf{SQuAD 2.0}} \\ 
\cmidrule(lr){2-10}\cmidrule(lr){11-12}
& \textbf{MNLI-(m/mm)} & \textbf{QQP} & \textbf{QNLI} & \textbf{SST-2} & \textbf{CoLA} & \textbf{RTE} & \textbf{MRPC} & \textbf{STS-B} & \textbf{AVG} &
\textbf{EM} & \textbf{F1}\\
\midrule
\multicolumn{12}{c}{\textit{large++} setting: larger Transformer model trained on larger pretraining corpora ($160$GB)} \\ 
\midrule
% BERT Large \\
% XLNet~\cite{yang2019xlnet} 
% & 90.8/90.8 & 92.3 & 94.9 & \textbf{97.0} & 69.0 & 85.9 & 90.8 & 92.5 & 89.2 & 87.9 & 90.6 \\
BART 
% ~\cite{Lewis2020BARTDS}
& 89.9/90.1 & \textbf{92.5} & 94.9 & \textbf{96.6} & 62.8 & 87.0 & 90.4 & 91.2 & 88.2 & 86.1 & 89.2 \\ 
RoBERTa
% ~\cite{liu2019roberta}
& 90.2/90.2 & 92.2 & 94.7 & 96.4 & 68.0 & 86.6 & \textbf{90.9} & \textbf{92.4} & 88.9 & 86.5 & 89.4 \\
% UniLM V2  \\
\method$^\dagger$
& \textbf{90.4/90.6} & 92.2 & \textbf{95.1} & 96.2 & \textbf{68.7} & \textbf{88.8} & 90.7 & 92.1 & \textbf{89.3} & \textbf{87.0} & \textbf{89.8} \\ 
\bottomrule
\end{tabular}
}
\vspace{-.5em}
\label{tab:large_model_res}
\end{table*}
We also show the performance of \method under larger model sizes in Table~\ref{tab:large_model_res}. 
Even trained on half of the pretraining data used in RoBERTa~\citep{liu2019roberta}, \method still performs comparably or better, demonstrating the potential of \method for larger models.

\textbf{Zero-Shot and Few-Shot Results.}
\begin{table*}[t]
\centering
\small 
\caption{
Zero-shot and few-shot performance. Few-shot results include mean and standard deviation (as subscripts) performance over 5 different training splits defined in \cite{gao2021making}. $^\dagger$: Results from \cite{gao2021making}.
}
\resizebox{\textwidth}{!}{
\begin{tabular}{l*{7}{l}ll}
\toprule
\multirow{2}{*}{\textbf{Model}} & \multicolumn{9}{c}{\textbf{GLUE (Single-Task)}} \\ 
& \textbf{MNLI-(m/mm)} & \textbf{QQP} & \textbf{QNLI} & \textbf{SST-2} & \textbf{CoLA} & \textbf{RTE} & \textbf{MRPC} & \textbf{STS-B} & \textbf{AVG} \\
\midrule
\multicolumn{10}{c}{\textit{zero-shot prompting}: direct inference on tasks via cloze-type MLM predictions} \\ 
\midrule
RoBERTa$^\dagger$
& $50.8/51.7$ & $49.7$ & $50.8$ & $83.6$ & $2.0$ & $51.3$ & $61.9$ & $-3.2$ & $43.4$ \\
\method
& $52.1/54.3$ & $52.0$ & $52.3$ & $83.5$ & $2.0$ & $54.5$ & $63.4$ & $-3.0$ & $44.7$ \\ 
\midrule
\multicolumn{10}{c}{\textit{head-based few-shot fine-tuning}: fine-tuning on $16$ samples per label with a linear classification head} \\ 
\midrule
RoBERTa$^\dagger$
& $45.8_{6.4}/47.8_{6.8}$ & $60.7_{4.3}$ & $60.2_{6.5}$ & $81.4_{3.8}$ & $33.9_{14.3}$ & $54.4_{3.9}$ & $76.6_{2.5}$ & $53.5_{8.5}$ & $58.4$ \\
\method
& $48.7_{4.5}/51.1_{6.0}$ & $64.5_{4.2}$ & $62.1_{6.1}$ & $81.2_{3.9}$ & $31.1_{13.9}$ & $58.0_{2.5}$ & $78.2_{2.1}$ & $53.0_{9.0}$ & $59.8$ \\ 
\midrule
\multicolumn{10}{c}{\textit{prompt-based few-shot fine-tuning}: fine-tuning on $16$ samples per label with cloze-type MLM templates} \\ 
\midrule
RoBERTa$^\dagger$
& $68.3_{2.3}/70.5_{1.9}$ & $65.5_{5.3}$ & $64.5_{4.2}$ & $92.7_{0.9}$ & $9.3_{7.3}$ & $69.1_{3.6}$ & $74.5_{5.3}$ & $71.0_{7.0}$ & $64.5$ \\
\method
& $70.7_{2.0}/73.3_{1.8}$ & $67.3_{4.6}$ & $65.1_{4.3}$ & $92.4_{1.1}$ & $14.3_{8.9}$ & $71.2_{3.3}$ & $74.8_{4.1}$ & $72.3_{6.5}$ & $66.2$ \\ 
\bottomrule
\end{tabular}
}
\vspace{-1em}
\label{tab:few_zero_res}
\end{table*}
Since \method is trained with the MLM objective, it is applicable to zero-shot and few-shot learning via prompt-based approaches. 
We report three groups of zero-shot/few-shot results on the GLUE tasks comparing MAE-LM (\textit{large++}) with RoBERTa (\textit{large++}) in Table~\ref{tab:few_zero_res}: (1) \textit{zero-shot prompting} which converts the classification tasks into cloze-type MLM predictions and directly uses pretrained models for inference on test sets; (2) \textit{head-based few-shot fine-tuning} which adds a linear classification head to the pretrained encoders for fine-tuning on $16$ samples per label; and (3) \textit{few-shot prompt-based fine-tuning} which fine-tunes the MLM models on tasks converted to cloze-type MLM formats with $16$ samples per label. 
We follow the basic manual prompt/label word setting and the training/development splits in \cite{gao2021making}.
For few-shot learning, the average and standard deviation over $5$ different training/development splits are reported.
Overall, MAE-LM can be combined with prompt-based methods for effective zero-shot and few-shot learning.

\section{More Discussions}
\label{app:discuss}
\textbf{Ethical Considerations.}
Despite their remarkable performance, pretrained models have been shown to come with risks such as exacerbating harmful biases~\citep{Bender2021OnTD,bommasani2021opportunities}. 
In our experiments, we follow the standard pretraining settings (\eg, data preparation, collection and preprocessing), and we expect more well-documented and filtered text corpora~\citep{Dodge2021DocumentingLW}, as well as future developments of harm reduction techniques~\citep{Liang2021TowardsUA} may help mitigate the ethical concerns about pretrained models.

\textbf{Connections to Prior Work.}
Since the advent of BERT~\citep{devlin2019bert}, there have been numerous developments in new pretraining and fine-tuning methods aiming to improve the effectiveness of pretrained models in downstream tasks.
The advantages of these proposed methods, however, are mostly demonstrated via empirical evidence alone, and our understanding of why certain methods are better than the others remains limited.
Our analyses in this work may advance the understanding of the benefits of some prominent methods:
ELECTRA~\citep{clark2020electra} fills \mask positions with real tokens; therefore, the encoder does not suffer from the representation deficiency issue. 
Different from the ablation in Section~\ref{sec:ablation} where we randomly sample real tokens to fill \mask, ELECTRA employs an MLM model to sample replaced tokens which are generally plausible alternatives to the original tokens, thus better preserving the contexts in pretraining.
These designs may help partially explain the effectiveness of ELECTRA.
Prompt-based methods~\citep{gao2021making,Schick2021ExploitingCF} adapt pretrained MLM models to downstream tasks by creating prompt templates that convert the target task into a masked token prediction problem.
This helps mitigate the representation deficiency issue that occurs in standard fine-tuning of MLM models as \mask tokens are also introduced into downstream data, resulting in more model dimensions being utilized.
Our findings may also shed light on certain previously observed phenomena in MLM models. For example, the rank deficiency issue might be responsible for the de-contextualization in self-attention patterns~\citep{Gong2019EfficientTO}.

\textbf{Implications on Autoregressive LMs.}
While autoregressive LM pretraining generally does not introduce artificial symbols such as \mask, our analyses can be easily extended to show that the representation deficiency issue can also arise in autoregressive pretraining when certain real tokens exist exclusively in the pretraining data but are either absent or occur infrequently in downstream data. Similar to the impact of \mask tokens, these tokens occupy dimensions during pretraining that may not be effectively utilized in downstream tasks. Consequently, it is desirable to maximize the vocabulary overlap between pretraining data and downstream data, which can be realized via pretraining data selection, training corpora pre-processing, and vocabulary pruning. We leave these explorations as future work.
% \newpage
% \include{Figures/mae-rank}
%%%%%%%%%%%%%%%%%%%%%%%%%%%%%%%%%%%%%%%%%%%%%%%%%%%%%%%%%%%%

\end{document}